\documentclass{article} 
\usepackage{iclr2026_conference,times}


\usepackage{amsmath,amsfonts,bm}









\def\eqref#1{equation~\ref{#1}}









\def\1{\bm{1}}


\def\rc{{\textnormal{c}}}

\def\rk{{\textnormal{k}}}
\def\rl{{\textnormal{l}}}
\def\rn{{\textnormal{n}}}

\def\rq{{\textnormal{q}}}
\def\rr{{\textnormal{r}}}
\def\rs{{\textnormal{s}}}
\def\rt{{\textnormal{t}}}

\def\rx{{\textnormal{x}}}
\def\ry{{\textnormal{y}}}


\def\rvx{{\mathbf{x}}}





\def\vx{{\bm{x}}}



\def\mI{{\bm{I}}}

\def\mV{{\bm{V}}}

\DeclareMathAlphabet{\mathsfit}{\encodingdefault}{\sfdefault}{m}{sl}
\SetMathAlphabet{\mathsfit}{bold}{\encodingdefault}{\sfdefault}{bx}{n}


\def\sA{{\mathbb{A}}}


\def\sX{{\mathbb{X}}}










\DeclareMathOperator*{\argmax}{arg\,max}
\DeclareMathOperator*{\argmin}{arg\,min}

\usepackage{hyperref}
\usepackage{url}

\usepackage{mathtools}
\usepackage{amsthm}
\usepackage{graphicx}
\usepackage{caption}
\usepackage{subcaption}
\usepackage[vlined,ruled]{algorithm2e}

\usepackage{bbm}
\usepackage{amssymb}
\usepackage[capitalise]{cleveref}
\usepackage[title]{appendix}
\usepackage{enumitem}
\usepackage{booktabs}


\newcommand{\subscr}[2]{#1_{\textup{#2}}}
\newcommand{\supscr}[2]{#1^{\textup{#2}}}

\newcommand{\seqdef}[2]{\{#1\}_{#2}}
\newcommand{\bigseqdef}[2]{\big\{#1\big\}_{#2}}
\newcommand{\bigsetdef}[2]{\big\{#1 \; | \; #2\big\}}
\newcommand{\Bigsetdef}[2]{\Big\{#1 \; \Big| \; #2\Big\}}
\newcommand{\biggsetdef}[2]{\bigg\{#1 \; \Big| \; #2\bigg\}}

\newcommand{\indicator}[1]{\mathbf{1}\left\{#1\right\}}

\DeclareMathOperator{\proj}{\Pi}
\DeclareMathOperator*{\trace}{trace}


\newcommand{\abs}[1]{\left|#1\right|}

\newcommand{\norm}[1]{\left\lVert #1\right\rVert}

\def\expt{\mathbb{E}}
\def\real{\mathbb{R}}

\def\natural{\mathbb{N}}

\newtheorem{theorem}{Theorem}

\newtheorem{lemma}[theorem]{Lemma}
\newtheorem{corollary}[theorem]{Corollary}

\newtheorem{remark}{Remark}

\newtheorem{assumption}{Assumption}
\crefname{assumption}{Assumption}{Assumptions}
\newtheorem{definition}{Definition}

\title{Latency-Aware Contextual Bandit: \\ Application to Cryo-EM Data Collection}


\author{Lai Wei\textsuperscript{1}, Ambuj~Tewari\textsuperscript{2} \& Michael~A.~Cianfrocco\textsuperscript{1,3}  \\
\textsuperscript{1}Life Sciences Institute, \textsuperscript{2}Department of Statistics, \textsuperscript{3}Department of Biological Chemistry \\
University of Michigan\\
Ann Arbor, MI 48104, USA \\
\texttt{\{lweidav, tewaria, mcianfro\}@umich.edu} \\
}

%

\iclrfinalcopy 
\begin{document}

\maketitle

\begin{abstract}
We introduce a latency-aware contextual bandit framework that generalizes the standard contextual bandit problem, where the learner adaptively selects arms and switches decision sets under action delays. In this setting, the learner observes the context and may select multiple arms from a decision set, with the total time determined by the selected subset. The problem can be framed as a special case of semi-Markov decision processes (SMDPs), where contexts and latencies are drawn from an unknown distribution. Leveraging the Bellman optimality equation, we design the contextual online arm filtering (COAF) algorithm, which balances exploration, exploitation, and action latency to minimize regret relative to the optimal average-reward policy. We analyze the algorithm and show that its regret upper bounds match established results in the contextual bandit literature. In numerical experiments on a movie recommendation dataset and cryogenic electron microscopy (cryo-EM) data, we demonstrate that our approach efficiently maximizes cumulative reward over time.
\end{abstract}

\section{Introduction}

The contextual bandit framework models sequential decision-making under uncertainty: the learner observes context, selects an action, and receives feedback only for that action~\citep{lattimore2020bandit}. This framework is widely used in domains requiring personalization, experimentation, or optimization under uncertainty, including recommender systems, healthcare, education, finance, and energy management~\citep{li2010contextual, tewari2017ads, lan2016contextual, soemers2018adapting, chen2020online}. Standard formulations do not account for the latency of acquiring information or executing actions. In practice, obtaining contexts—such as assay results, medical records, or experimental measurements—often involves non-negligible delays. Similar challenges arise in scientific automation, where experimental decisions must trade off information gain against time constraints. Examples include high-throughput drug discovery, automated materials science, and astronomy~\citep{blay2020high, pyzer2022accelerating, Adler2020JWST}. A prominent instance is cryo-electron microscopy (cryo-EM)~\citep{Li2023Optimized}, where limited and costly microscope time must be efficiently allocated to the most informative imaging targets.

To address this limitation, we extend the contextual bandit model to incorporate latency-aware decision-making. At each round, the learner selects multiple arms from a decision set and receives their rewards, with a total time cost determined by the chosen subset. Maximizing cumulative reward in this setup requires balancing the trade-off between \emph{exploration} (gathering information from new actions) and \emph{exploitation} (selecting actions with high expected rewards), while also implementing an effective strategy for arm selection under latency. This problem can be framed as a special case of SMDPs where the reward function, sojourn time distribution, and transition probabilities are unknown, making it a reinforcement learning task. We adopt the framework of undiscounted reinforcement learning~\citep{auer2008near} under the average reward criterion. In particular, we make the following contributions:
\vspace{-\topsep}
\begin{itemize}[leftmargin=*]
    \item We analyze the latency-aware contextual bandit problem and derive the Bellman optimality equation to characterize the optimal policy. We show that the maximum average reward can be obtained by finding the root of a function with noisy measurements.

    \item Building on the Bellman optimality equation, we leverage stochastic approximation and the upper confidence bound (UCB) method to design the COAF algorithm, which efficiently selects arms and switching decision sets under action latency. We establish that COAF achieves sublinear regret and validate its performance through numerical experiments on a movie recommendation dataset and cryo-EM data collection. The results demonstrate the effectiveness of COAF in time-sensitive decision-making tasks.
\end{itemize}

\section{Related Works}

In sequential decision-making under uncertainty, maximizing reward requires balancing exploration and exploitation. The multi-armed bandit (MAB) problem formalizes this trade-off: a learner repeatedly selects arms with unknown reward distributions and aims to minimize regret, defined as the difference between the cumulative reward of an online algorithm and that of the optimal arm~\citep{robbins1952some, lai1985asymptotically}. Classical algorithms with strong theoretical guarantees include the UCB family, which selects arms according to optimistic estimates of their mean rewards. Thompson sampling, one of the earliest MAB solutions, is a randomized Bayesian algorithm that nonetheless addresses the fundamentally frequentist problem of regret minimization~\citep{WRT:33, thompson_agrawal, thompson_kaufmann}. Other widely studied policies include the Gittins index~\citep{gittins2011multi, lattimore2016regret} and minimum empirical divergence~\citep{honda2010asymptotically}. 

The contextual bandit problem generalizes the MAB by allowing the learner to make decisions based on observed contexts. This framework naturally integrates statistical learning and function approximation into sequential decision-making. Contextual bandit algorithms can be broadly categorized into two types. \emph{Realizability-based approaches} assume that rewards follow a known parametric family, enabling efficient algorithms with strong theoretical guarantees. Representative examples include LinUCB and linear Thompson sampling for linear models~\citep{chu2011contextual, agrawal2013thompson}, and GLM-UCB, GLM-TS, and GLOC for generalized linear models~\citep{filippi2010parametric, abeille2017linear, jun2017scalable}. In contrast, \emph{general-purpose approaches} make weaker assumptions, accommodating broader function classes. They often rely on regression oracles, with regret bounds expressed in terms of sample complexity measures such as VC-dimension, eluder dimension, or the performance of a square-loss minimizing oracle~\citep{langford2007epoch, beygelzimer2011contextual, russo2013eluder, foster2020beyond}. Empirically, realizability-based methods outperform general-purpose approaches when the reward model is well-specified, while the latter offer greater flexibility under unknown or complex reward structures~\citep{bietti2021contextual}.

Our problem formulation allows the learner to select multiple arms from a decision set. This setup was first introduced by~\citet{anantharam1987asymptotically} and is widely studied in combinatorial bandits. The reward function can be linear with respect to the individual arm rewards~\citep{cesa2012combinatorial} or nonlinear, capturing interactions and combinatorial constraints~\citep{chen2013combinatorial, kveton2014matroid, chen2016combinatorial}. Contextual combinatorial bandits focus on learning the combinatorial reward structure under context, where decision sets arrive sequentially and selecting a combination of arms incurs a fixed time cost~\citep{qin2014contextual}. This model does not explicitly account for action latency. We adopt the semi-bandit feedback model~\citep{kveton2015tight}, where the learner receives granular feedback in the form of individual rewards for each selected arm.

The idea of switching decision sets in our problem is inspired by the mortal MAB~\citep{chakrabarti2008mortal}, where the learner can request new decision sets, and the lifetime of each set (i.e., the number of available arms) follows a geometric distribution. Similarly, the sleeping experts problem~\citep{kanade2009sleeping, kleinberg2010regret} considers a dynamic arm set, where arms are activated either stochastically or by an adversary. In this setup, the learner passively reacts to the changes of arm sets. In contrast, as in mortal MAB, our formulation allows the learner to actively control the dynamics by switching to new decision sets, potentially accessing better arms. Our problem integrates dynamic control of action space into contextual decision-making, combining elements of mortal MAB and contextual combinatorial bandits. This adds complexity, as the learner must balance both expected reward and the time required for each decision.

\section{Problem Settings}\label{sec: problem setting}
This section presents the formal problem formulation for the latency-aware contextual bandit and discusses connections to existing works. As a natural application, cryo-EM data collection is introduced, where microscope operations induce inherent latencies.

\subsection{Latency-Aware Contextual Bandits}\label{sec: MBSP}

We consider a latency-aware contextual bandit problem. At each round $j = 1,2,\dots$:
\begin{itemize}[leftmargin=*]
    \item The learner observes: (i) the arm feature vectors $\sX_j = \{\rvx_{j,1}, \dots, \rvx_{j,\rn_j}\} \subset \real^d$; (ii) the action space $\sA_j \subseteq 2^{[\rn_j]}$ containing subsets of arms; and (iii) a latency function $\rl_j : \sA_j \to \mathbb{R}_{\ge 0}$.

    \item The learner selects a subset of arms $A_j \in \sA_j$ and observes semi-bandit feedback: for each $i \in A_j$, the reward $\ry_{j,i}$ is revealed, while the rewards of unchosen arms remain unknown.

    \item The learner receives total reward $\rr_j(A_j) = \sum_{i \in A_j} \ry_{j,i}$\footnote{While $\rr_j(A_j)$ can be generalized to be non-linear under monotonicity and Lipschitz assumptions, as in~\citet{qin2014contextual}, we focus on the linear case, as handling arm interactions is beyond the scope of this work.}, and the time spent is $\rl_j(A_j)$.
\end{itemize}

Several elements of the setup are stochastic, including $\sX_j$ (and its size $\rn_j$), $\sA_j$, and $\rl_j$. We assume that the sequence 
$\{(\sX_j, \sA_j, \rl_j)\}_{j=1}^{\infty}$ is IID, with each $(\sX_j, \sA_j, \rl_j)$ drawn from an unknown distribution $P_{\mathrm{env}}$. Arbitrary dependencies among $\sX_j$, $\sA_j$, and $\rl_j$ within a round $j \in \mathbb{N}$ are allowed. Each selected arm $i$ yields a random reward $\ry_{j,i} = \psi_*(\rvx_{j,i}) + \epsilon_{j,i}$, where $\psi_*: \real^d \rightarrow [-1, 1]$ is the bounded mean reward function unknown to the learner, and noise $\epsilon_{j,i}$ is a zero-mean random variable. We further impose the following assumptions.

\begin{assumption}[Boundedness] \label{ass: bounded}
There exist $n_{\max}, l_{\min}, l_{\max} > 0$ such that for all rounds $j \in \mathbb{N}$:  
(i) the number of arms $\rn_j \leq n_{\max}$;  
(ii) the action time $\rl_j(A_j) \in [l_{\min}, l_{\max}]$ for all $A_j \in \sA_j$.
\end{assumption}

\begin{assumption}[Realizability \citep{foster2018practical}] \label{ass: real}
The learner is given a regressor class $\mathcal{F}$ that contains the bounded mean reward function $\psi_*$, i.e.,
$\psi_* \in \mathcal{F}$.
\end{assumption}

\begin{remark}
The term $\rl_j(\emptyset)$ can be interpreted as the time spent to acquire contextual information, and the condition $\rl_j(A_j) \ge l_{\min} > 0$ for all $A_j \in \sA_j$ ensures temporal progress at each round $j$. Under Assumption~\ref{ass: real}, it suffices to learn the mean reward function within the regressor class $\mathcal{F}$.
\end{remark}

The problem is formally specified by $\mathcal{M} = (\,P_{\mathrm{env}}, \psi_*\,)$, and the objective is to maximize cumulative reward without prior knowledge of $\mathcal{M}$. Beyond the standard exploration--exploitation trade-off in MAB problems, the learner must balance exploiting the current decision set, where good arms may be exhausted, with switching to new sets, taking action latency into account.

\subsection{Relationship with Existing Works}\label{sec: MMB}

The problem studied in this paper generalizes the following existing bandit setups.

\textbf{Stochastic contextual bandits~\citep{lattimore2020bandit}:} 
In this setup, the learner observes the context of arm features $\sX_j$ and selects a single arm at each round $j$. This corresponds to a special case of our problem, where $\sA_j = \big \{\{1\}, \dots, \{\rn_j\} \big \}$ and the action time $\rl_j(A_j) = 1$ for all $A_j \in \sA_j$.

\textbf{Contextual combinatorial semi-bandits~\citep{qin2014contextual}:} 
This formulation, like our problem, allows the learner to select subsets of arms $A_j \in \sA_j \subseteq 2^{[\rn_j]}$ at each round $j$, but does not explicitly model action latency, i.e., $\rl_j(A_j) = 1$ for all $A_j \in \sA_j$.

\textbf{Mortal MAB~\citep{chakrabarti2008mortal}:} 
In this setup, all arms in a decision set have identical rewards, $\ry_{j,i} = \ry_j$ for all $i \in [\rn_j]$, and the sequence $\{\ry_j\}_{j=1}^\infty$ is IID with $\ry_j \sim P_{\ry}$. The number of arms $\rn_j$, drawn from a geometric distribution with parameter $p$, represents the lifetime of the decision set. Our problem generalizes this setup by incorporating contextual information and allowing heterogeneous rewards across arms. The mortal MAB is a planning problem, as both $P_{\ry}$ and $p$ are assumed known, whereas our setting is a learning problem with unknown $P_{\mathrm{env}}$ and $\psi_*$.

\subsection{Cryo-EM Data Collection}

Single-particle cryo-EM is a structural biology technique for determining near-atomic resolution 3D structures of biomolecules. A purified sample is applied to a thin, electron-transparent grid and rapidly frozen in vitreous ice. Imaging produces 2D projections of particles by passing an electron beam through the sample. A typical data collection workflow is illustrated in~\cref{fig: cryoem_pipline}. The grid contains multiple \emph{squares}, each with several \emph{holes} where biomolecules are preserved in thin ice. Grid-level and square-level views are used to navigate and select holes for \emph{full exposures}. These high-magnification exposures, taken with high electron doses, are used for 3D reconstruction of biomolecules. Radiation irreversibly damages the sample, so each region can be imaged only once.

\begin{figure}[t]
\begin{center}
 \includegraphics[width=0.99\textwidth]{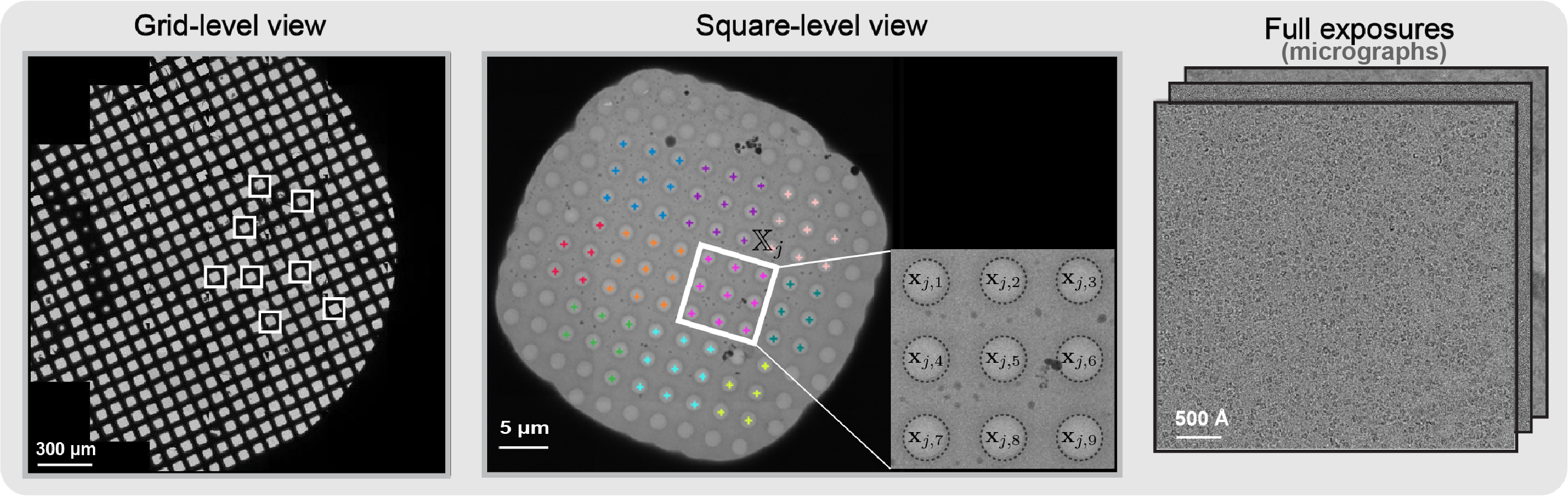}
\end{center}
\caption{Cryo-EM data collection at multiple magnifications: (i) \emph{grid-level} shows the entire grid at low magnification, (ii) \emph{square-level} captures individual squares at medium magnification to assess ice quality within holes, and (iii) \emph{full exposures} are high-magnification images of selected holes.}
\label{fig: cryoem_pipline}
\end{figure}

Cryo-EM data collection is inherently a bandit problem with partial feedback: selecting a set of holes reveals the data quality only for the chosen holes, while unselected holes remain unknown. Our latency-aware formulation captures the time required for exposures, refocusing, and stage movements. Neighboring holes can often be imaged via fast beam shifts, but larger movements require physically moving the stage, which is slower and necessitates additional refocusing. To capture this, holes in a square are divided into patches (colored in~\cref{fig: cryoem_pipline}), each forming a decision set of $\rn_j$ arms, with contexts $\sX_j$ extracted from cropped square-level images. The learner selects a subset $A_j \subseteq [\rn_j]$ for full exposures. For a microscope, the exposure time $T_{\mathrm{exp}}$ and the latency $T_{\mathrm{mov}}$ for moving to the next patch are typically known or easily estimated. Let $\rt_j$ denote the stochastic time to acquire the square-level view and extract contexts $\sX_j$. Then, the latency of action $A_j$ is
\begin{equation}\label{eq: microscope}
    \rl_j(A_j) = \rt_j + T_{\mathrm{mov}} \indicator{A_j \neq \emptyset} + T_{\mathrm{exp}} \, |A_j|.
\end{equation}
The feedback $\ry_{j,i}$ is obtained by evaluating the high-magnification micrographs. Micrograph quality can be quantified using the CTF maximum resolution~\citep{rohou2015ctffind4}, which measures the finest structural detail in \AA\ ($0.1$~nm). With sufficient computational resources, additional metrics—such as the number of biomolecules detected per micrograph or assessments from deep learning models like MicAssess~\citep{li2020high}—can also be incorporated.

\section{Maximization of Average Reward}
In this section, we study the maximum average reward achievable in the latency-aware contextual bandit problem. Assuming a known mean reward function $\psi_*$, we derive the Bellman optimality equation, which can be used to compute this maximum average reward. This quantity then serves as a baseline for defining the regret of an algorithm, which we aim to minimize throughout the paper.

\subsection{Optimal Average Reward}

At each round $k$, the learner follows a policy $\pi$ to select a subset of arms $A_k \in \sA_k$. Let the \emph{history} up to round $k$ be $\mathcal{H}_{k-1} \triangleq \bigseqdef{\big(\sX_j, \sA_j, \rl_j, \{\ry_{j,i}\}_{i \in A_j}\big)}{j=1}^{k-1}$. The policy $\pi$ maps the history $\mathcal{H}_{k-1}$ and the current arm features $\sX_k$ to a probability distribution over the action space $\sA_k$. Let $\rk(t)$ denote the (random) number of completed decision rounds up to time $t$. The \emph{expected cumulative reward} of a policy $\pi$ up to time $t$ is  
\begin{equation}\label{eq: cum_reward_t}
    \mathbb{E} \left[\sum_{j=1}^{\rk(t)} \sum_{i \in A_j} \ry_{j,i} \right]
    = \mathbb{E} \left[ \sum_{j=1}^{\rk(t)} \sum_{i \in A_j} \mathbb{E}[\ry_{j,i} \mid \mathcal{H}_{j-1}] \right]
    = \mathbb{E}\left[ \sum_{j=1}^{\rk(t)} \sum_{i \in A_j} \psi_*(\rvx_{j,i}) \right] \triangleq \expt \big[\rq^{\pi}(t)\big],
\end{equation}
which depends on both the environment $\mathcal{M}$ and the policy $\pi$. 
In the average-reward setting, the performance of $\pi$ is evaluated by the long-term average reward
\[
\Gamma^\pi_{\mathcal{M}} \triangleq \limsup_{t \to \infty} \frac{\mathbb{E}[\rq^\pi(t)]}{t}.
\]
With~\cref{ass: bounded} and the mean reward function $\psi_*$ bounded in $[-1, 1]$, $\Gamma^\pi_{\mathcal{M}}$ is finite for any policy $\pi$. Then \emph{optimal average reward} is then defined as  
\[
\Gamma^*_{\mathcal{M}} \triangleq \sup_{\pi} \Gamma^\pi_{\mathcal{M}}.\]
Treating $\rl_j(A_j)$ in round $j$ as the sojourn time, our model can be viewed as a special case of SMDPs~\citep{puterman2014markov}. The following theorem provides its Bellman optimality equation.

\begin{theorem}\label{th: MBSP_ub}
For the latency-aware contextual bandit problem $\mathcal{M} = (P_{\mathrm{env}}, \psi_*)$, let $(\sX, \sA, \rl) \sim P_{\mathrm{env}}$ and let $\boldsymbol{\mu} = \seqdef{\mu_i}{i=1}^{\rn}$, where $\mu_i = \psi_*(\rvx_i)$ for each $\rvx_{i} \in \sX$. The optimal average reward $\Gamma = \Gamma_{\mathcal{M}}^*$ is the unique solution to $\mathbb{E} \big[ \min_{A \in \sA} g(\Gamma, A, \rl, \boldsymbol{\mu}) \big] = 0$,
where
\begin{equation}\label{eq: gap}
g(\Gamma, A, \rl, \boldsymbol{\mu}) \triangleq  \rl(A)\, \Gamma - \sum_{i \in A} \mu_i.
\end{equation}
\end{theorem}

\begin{proof}
    We adopt the concept of differential return from average-reward MDPs~\citep{sutton2018reinforcement}. At eacg decision round $j$, selecting $A_j \in \sA_j$ incurs time $\rl_j(A_j)$. The quantity 
    $g(\Gamma^*_{\mathcal{M}}, A_j, \rl_j, \boldsymbol{\mu}_j)$ defined in~\eqref{eq: gap} measures the \emph{gap} between the optimal expected reward $\rl_j(A_j)\Gamma^*_{\mathcal{M}}$ in the time interval and the actual collected reward $\sum_{i \in A_{j,i}} \mu_{j,i}$.

    \paragraph{Step 1:} 
    For any policy $\pi$, let $A_j \in \sA_j$ denote the selected arms at round $j$. Since $t$ is possible in the middle of a decision round, $\Big| t - \sum_{j=1}^{\rk(t)} \rl_j(A_j) \Big| \le l_{\max}$, where $l_{\max}$ is from Assumption~\ref{ass: bounded}. Then
    \begin{equation}\label{eq: reg_lb}
        t \Gamma^*_{\mathcal{M}} - \rq^\pi(t) 
        \ge \sum_{j=1}^{\rk(t)} g(\Gamma_{\mathcal{M}}^*, A_j, \rl_j, \boldsymbol{\mu}_j) - l_{\max}|\Gamma^*_{\mathcal{M}}|
        \ge \sum_{j=1}^{\rk(t)} \min_{A \in \sA_j} g(\Gamma_{\mathcal{M}}^*, A, \rl_j, \boldsymbol{\mu}_j) - l_{\max}|\Gamma^*_{\mathcal{M}}|.
    \end{equation}    
    Using Wald's lemma~\citep{durrett2019probability} and the IID assumption on $\{(\sX_j, \sA_j, \rl_j)\}_{j=1}^{\infty}$,
    \begin{equation} \label{eq: wald}
        \expt\Bigg[\sum_{j=1}^{\rk(t)} \min_{A \in \sA_j} g(\Gamma_{\mathcal{M}}^*, A, \rl_j, \boldsymbol{\mu}_j) \Bigg] 
        = \expt[\rk(t)] \, \expt\Big[\min_{A \in \sA} g(\Gamma_{\mathcal{M}}^*, A, \rl, \boldsymbol{\mu}) \Big].
    \end{equation}
    
    With~\eqref{eq: wald}, dividing \eqref{eq: reg_lb} by $t$ and taking the $\limsup$ of expectations yields
    \[
    \Gamma^*_{\mathcal{M}} - \Gamma^\pi_{\mathcal{M}} \geq \limsup_{t\to\infty} \frac{\expt[\rk(t)]}{t} \, \expt\Big[\min_{A \in \sA} g(\Gamma_{\mathcal{M}}^*, A, \rl, \boldsymbol{\mu}) \Big].\]
    
    Since this holds for any $\pi$, taking the supremum of $\Gamma^\pi_{\mathcal{M}}$ over $\pi$ gives
    \[
    \limsup_{t\to\infty}  \frac{\expt[\rk(t)]}{t} \, \expt\Big[\min_{A \in \sA} g(\Gamma_{\mathcal{M}}^*, A, \rl, \boldsymbol{\mu}) \Big] \le 0.
    \]
    
    Since $\limsup_{t\to\infty} {\expt[\rk(t)]}/{t}  > 0$, we get $\expt\big[\min_{A \in \sA} g(\Gamma_{\mathcal{M}}^*, A, \rl, \boldsymbol{\mu}) \big] \leq 0$.

    \paragraph{Step 2:}
    Let policy $\pi'$ selects $A_j = \argmin_{A \in \sA_j} g(\Gamma_{\mathcal{M}}^*, A, \rl_j, \boldsymbol{\mu}_j)$ 
    at each round $j$. Using the same bound for $t$ in the middle of a decision round,
    \[
    t \Gamma^*_{\mathcal{M}} - \rq^{\pi'}(t) \le \sum_{j=1}^{\rk(t)} \min_{A \in \sA_j} g(\Gamma_{\mathcal{M}}^*, A, \rl_j, \boldsymbol{\mu}_j) + l_{\max} \abs{\Gamma^*_{\mathcal{M}}}.
    \]
    Taking expectations and applying~\eqref{eq: wald} give
    \[
    \limsup_{t\to\infty} \frac{\expt[\rk(t)]}{t} \, \expt\Big[\min_{A \in \sA} g(\Gamma_{\mathcal{M}}^*, A, \rl, \boldsymbol{\mu}) \Big]
    \ge \Gamma^*_{\mathcal{M}} - \Gamma^{\pi'}_{\mathcal{M}} \ge 0.
    \]
    Since $\limsup_{t\to\infty} {\expt[\rk(t)]}/{t}  > 0$, we have $\expt\big[\min_{A \in \sA} g(\Gamma_{\mathcal{M}}^*, A, \rl, \boldsymbol{\mu}) \big] \ge 0$.
    
    Combining both steps, we conclude $ \expt \big[\min_{A \in \sA} g(\Gamma_{\mathcal{M}}^*, A, \rl, \boldsymbol{\mu}) \big] = 0
    $. With $\rl(A) > L_{\min} > 0$, the function $g(\Gamma, A, \rl, \boldsymbol{\mu})$ is strictly increasing in $\Gamma$, and hence $\expt\big[\min_{A \in \sA} g(\Gamma, A, \rl, \boldsymbol{\mu})\big]$ is also strictly increasing in $\Gamma$. Therefore, the solution $\Gamma = \Gamma_{\mathcal{M}}^*$ is unique.
\end{proof}

\begin{remark}
    The effect of arm quality and latency on the optimal average reward $\Gamma_{\mathcal{M}}^*$ can be seen from~\cref{th: MBSP_ub}. Larger delays $\rl(A)$ steepen the growth of $\expt[\min_{A \in \sA} g(\Gamma, A, \rl, \boldsymbol{\mu})]$ with $\Gamma$, resulting in a smaller root $\Gamma_{\mathcal{M}}^*$. Conversely, higher mean rewards $\boldsymbol{\mu}$ shift the function downward, yielding a larger $\Gamma_{\mathcal{M}}^*$. The proof also indicates that the policy $\pi'$ minimizing $g(\Gamma_{\mathcal{M}}^*, A, \rl_j, \boldsymbol{\mu}_j)$ is optimal.
\end{remark}

\subsection{Algorithm Regret}

Since the objective of a policy $\pi$ is to maximize cumulative reward, the optimal average reward $\Gamma^*_{\mathcal{M}}$ serves as a natural performance benchmark. The \emph{regret} of $\pi$ at time $T$ is defined as
\begin{equation}\label{eqdef: regret}
    R_T^{\pi} \triangleq T \Gamma_{\mathcal{M}}^* - \expt \big[\rq^{\pi}(T)\big],
\end{equation}
which we aim to minimize. Our goal is to design policies that perform well across general problem setups. Specifically, policy $\pi$ aims to minimize the \emph{worst-case regret} $\sup_{\mathcal{M}} R_T^{\pi} (\mathcal{M})$, while the optimal value of this quantity, known as the \emph{minimax regret}, is given by $\inf_{\pi} \sup_{\mathcal{M}} R_T^{\pi} (\mathcal{M})$.  

The minimax regret lower bound in contextual bandit settings is known to depend on the regressor class $\mathcal{F}$. Since contextual bandits are a special case of the problem studied here, these lower bound results also apply. In particular, when $\mathcal{F}$ is the class of $d$-dimensional linear functions, the state-of-the-art minimax regret lower bound is $\Omega(d\sqrt{T})$~\citep{lattimore2020bandit}.

\section{Contextual Online Arm Filtering Algorithm} \label{sec: COAF}

The latency-aware contextual bandit problem poses a significant challenge, as it requires learning both the mean reward function $\psi_*$ and the potentially complex distribution $P_{\mathrm{env}}$ underlying for $\{(\sX_j, \sA_j, \rl_j)\}_{j=1}^{\infty}$. 
A key insight from the proof of Theorem~\ref{th: MBSP_ub} (step 2) is that it is optimal to take action $A_j \in \argmin_{A \in \sA_j} g(\Gamma_{\mathcal{M}}^*, A, \rl_j, \boldsymbol{\mu}_j)$ in each round $j$. Since $\Gamma_{\mathcal{M}}^*$ and $\boldsymbol{\mu}_j$ are unknown, this minimization is not directly feasible. In this section, we introduce and analyze the COAF algorithm, which relies on estimated values of $\Gamma_{\mathcal{M}}^*$ and $\boldsymbol{\mu}_j$ to filter out suboptimal arms.

\subsection{A Generic Algorithm Framework}

\begin{algorithm}[t]
	{	\SetKwInput{Input}{Input~}
		\SetKwInOut{Set}{Set~}
		\SetKwInOut{Title}{Algorithm}
		\SetKwInOut{Output}{Output~}
            \SetKwInput{Initialization}{Initialization}
		
		{	
			\Initialization{$\xi \in (0, 1]$, $\Gamma_1 \in [\Gamma_{\min}, \Gamma_{\max}]$ and $\gamma_0 = 0$.}
		}
  
		\medskip          
            
            \For{$j = 1,2,\dots$}{
            
            \smallskip

            \nl Estimate the mean rewards for each arm $i$ base on $\mathcal{H}_{j-1}$, denoted by $\hat{\boldsymbol{\mu}}_j = \{\hat{\mu}_{j,i}\}_{i=1}^{\rn_j}$.

            \smallskip

            \nl Select subset of arms            
            \begin{equation*}
                A_j \in  \argmin_{A \in \sA_j} g(\Gamma_j, A, \rl_j, \hat{\boldsymbol{\mu}}_j).
            \end{equation*}

            \nl Set $\gamma_{j} = \gamma_{j-1} + \min_{A \in \sA_j}\rl_j(A)$, and set $\Gamma_{j+1} = \proj_{[\Gamma_{\min}, \Gamma_{\max}]} \left[ \Gamma_{j} - \frac{1}{\xi \gamma_{j}} g(\Gamma_j, A_j, \rl_j, \hat{\boldsymbol{\mu}}_j)  \right]$. \label{COAFstep: gamma}
            }
        }
		
		\caption{Contextual Online Arm Filtering (COAF)}
		\label{alg: COAF}
\end{algorithm} 

Stochastic approximation~\citep{robbins1951stochastic} is a standard method for finding the root of an unknown real-valued function using noisy measurements. Since in Theorem~\ref{th: MBSP_ub} has shown that $\mathbb{E} \big[ \min_{A \in \sA} g(\Gamma_{\mathcal{M}}^*, A, \rl, \boldsymbol{\mu}) \big] = 0$, COAF in~\cref{alg: COAF} employs this approach to maintain an estimator $\Gamma_j$ for $\Gamma_{\mathcal{M}}^*$ at each round $j$. In~\cref{COAFstep: gamma} of~\cref{alg: COAF}, $\Gamma_j$ is updated via stochastic approximation and projected back to $[\Gamma_{\min}, \Gamma_{\max}]$, where $\Gamma_{\min} = -n_{\max}/l_{\min}$ and $\Gamma_{\max} = n_{\max}/l_{\min}$. 

Since the true reward function $\psi_*$ is unknown, COAF estimates the mean rewards 
$\hat{\boldsymbol{\mu}}_j$ from the sampling history $\mathcal{H}_{j-1}$. 
To balance exploration and exploitation, we adopt a UCB strategy by constructing 
a confidence set $\mathcal{F}_j \subseteq \mathcal{F}$ based on $\mathcal{H}_{j-1}$, 
which contains $\psi_*$ with high probability. At each round $j$, the UCB for arm $i$ is
$\hat{\mu}_{j,i} = \max_{\psi \in \mathcal{F}_{j}} \psi(\mathbf{x}_{j,i})$,
ensuring, with high probability, that 
$\hat{\mu}_{j,i} \ge \mu_{j,i} = \psi_*(\mathbf{x}_{j,i})$. In this paper, we focus on the UCB approach due to its simplicity and theoretical soundness. However, it is plausible to expect that other index-based contextual bandit algorithms, such as Thompson Sampling, can also be adapted to the COAF framework.



\subsection{Worst-case Regret Analysis for COAF}


In~\cref{alg: COAF}, if $\hat{\boldsymbol{\mu}}_j = \boldsymbol{\mu}_j$, the stochastic approximation procedure  ensures $\expt \big[ (\Gamma_j-\Gamma_{\mathcal{M}}^*)^2 \big] \rightarrow 0$ as $j \rightarrow \infty$. The following result is derived by relating the regret to the convergence rate of $\Gamma_j$. 
\begin{lemma}\label{lm: coaf_orc}
Consider any latency-aware contextual bandit problem 
$\mathcal{M} = (P_{\mathrm{env}}, \psi_*)$.
Suppose the COAF algorithm runs with $\xi = 1$ and has exact mean reward estimates, i.e., 
$\hat{\boldsymbol{\mu}}_j = \boldsymbol{\mu}_j$ in every round $j$. 
Then, for any time horizon $T > 0$, the regret of COAF satisfies
\begin{equation*}
    \supscr{R_T}{C} \! \leq \! \sqrt{U_T}  + \frac{n_{\max} l_{\max}}{l_{\min}},
    \text{ where } \,
    U_T \triangleq \frac{T}{l_{\min}} \bigg( \frac{l_{\max} n_{\max}}{l_{\min}} \bigg)^{\! 2} \bigg(1+\frac{l_{\max}}{l_{\min}}\bigg)^{\! 2} \bigg[ 1+\log\Big(\frac{T}{l_{\min}}\Big) \bigg].
\end{equation*}
\end{lemma}
\begin{remark}
    Lemma~\ref{lm: coaf_orc} captures the oracle case where COAF has access to the true mean rewards, and establishes an $O(\sqrt{T \log T})$ regret upper bound that arises solely from learning $\Gamma_{\mathcal{M}}^*$.
\end{remark}
The general COAF algorithm needs to learn the unknown reward function $\psi_*$ within the regressor class $\mathcal{F}$ from noisy observations. Following standard practice in the contextual bandit literature~\citep{lattimore2020bandit}, we assume that the noise is conditionally sub-Gaussian.

\begin{assumption}\label{ass: subGaussian}
For any $j \in \mathbb{N}$, $\seqdef{\epsilon_{j,i}}{i = 1}^{\rn_j}$ are independent and conditionally 
$1$-subgaussian:
\[
    \mathbb{E} \left[ e^{\alpha \epsilon_{j,i}} \,\middle|\, \mathcal{H}_{j-1} \right] 
    \leq \exp \Big( \tfrac{\alpha^2}{2} \Big),
    \quad \forall \alpha \in \mathbb{R}, \;\forall i \in [\rn_j].
\]
\end{assumption}


\subsubsection{Regret Upper Bound with Linear $\mathcal{F}$}
The linear regressor class is defined as $\mathcal{F}^l \triangleq \left\{ \vx \mapsto \langle \theta, \vx \rangle \;\middle|\; 
\theta \in \mathbb{R}^d, \;\|\theta\| \leq 1 \right\}$, where the context space is 
$\Omega^l \triangleq \left\{ \vx \in \mathbb{R}^d \;\middle|\; \|\vx\| \leq 1 \right\}$. To estimate $\theta_*$ corresponding to $\psi_*$, let 
\[\Bar{\theta}_{k} = \mV_{k}^{-1}(\lambda) \sum_{j=1}^{k} \sum_{i \in A_j} \rvx_{j,i} \ry_{j,i}, \; \mV_{k}(\lambda) = \lambda \mI + \sum_{j=1}^{k} \sum_{i \in A_j} \rvx_{j,i} \rvx_{j,i}^\top,\]
where $\lambda>0$ is the regularization parameter. The regressor confidence set at round $k$ is defined as
\[\mathcal{F}_{k} \triangleq \left\{ \vx \mapsto \langle \theta, \vx \rangle \;\middle|\; \theta \in \real^d, \;\big\lVert{\theta - \Bar{\theta}_{k-1}}\big\rVert^2_{\mV_{k-1}(\lambda)} \leq \beta(\rs_{k-1}, \delta), \;\|\theta\| \leq 1 \right\},\]
where $\delta \in (0,1)$, $\rs_k = \sum_{j=1}^k \abs{A_j}$ and $\sqrt{\beta (n, \delta) } = \sqrt{\lambda} + \sqrt{2 \log (\frac{1}{\delta}) + d\log\left(\frac{d \lambda + n}{d \lambda}\right)}$.
\begin{theorem} \label{th: cmbub}
    For any latency-aware contextual bandit problem $\mathcal{M} = (P_{\mathrm{env}}, \psi_*)$, suppose $\psi_* \in \mathcal{F}^l$ and the context space is $\Omega^l$. For any $\delta \in (0, {1}/{\sqrt{e}}\,]$, with probability at least $1 - \delta$, the regret of COAF with parameter $\xi \in (0,1)$ at any time $T > 0$ satisfies
    \[
    \supscr{R_T}{C} \leq \sqrt{\frac{U_T}{\xi}  + \frac{1}{1-\xi} \Big( \frac{l_{\max}}{l_{\min}} \Big)^{\!3} W_T(\delta)  } +  \sqrt{\frac{ n_{\max}}{l_{\min}} W_T(\delta)} + \frac{n_{\max} l_{\max}}{l_{\min}},
    \]
    where $W_T (\delta) = 8d T \beta \Big(\tfrac{T n_{\max}}{l_{\min}}, \delta\Big)\log \Big( \tfrac{d \lambda l_{\min} + T n_{\max}}{d\lambda l_{\min}}\Big)$.
\end{theorem}
\subsubsection{Regret Upper Bound with General $\mathcal{F}$}
For a general regressor class $\mathcal{F}$, let $N(\mathcal{F},\alpha,\norm{\cdot}_{\infty})$ be the $\alpha$-covering number of $\mathcal{F}$ in the sup-norm $\norm{\cdot}_{\infty}$. By calling the online regression oracle, COAF computes the estimated regressor 
\[\Bar{\psi}_{k} \in \argmin_{\psi \in \mathcal{F}} \sum_{j=1}^{k} \sum_{i \in A_j}  \big[ \psi(\rvx_{j,i}) - \ry_{j,i} \big]^2.\]
The abstract confidence set $\mathcal{F}_k$ centers around $\Bar{\psi}_k$ and is defined as
\begin{equation*}
    \mathcal{F}_{k} \triangleq \biggsetdef{\psi \in \mathcal{F}}{\sum_{j=1}^{k-1} \sum_{i \in A_j} [ \psi(\rvx_{j,i}) - \Bar{\psi}_{k-1}(\rvx_{j,i}) ]^2\leq \tilde\beta (\rs_{k-1}, \mathcal{F},\delta, \alpha)},
\end{equation*}
where $\tilde\beta (n, \mathcal{F},\delta, \alpha)\triangleq 8\log \big({N(\mathcal{F},\alpha,\norm{\cdot}_{\infty})}/{\delta}\big) + 2\alpha n \big(8 + \sqrt{8 \log ({4 n^2}/{\delta} }) \big)$ is the tolerence. In practice, $\alpha$ can be chosen on the order of $1/T$.

The eluder dimension of a function class $\mathcal{F}$, defined below, measures reward dependencies across contexts~\citep{russo2013eluder} and is widely used in contextual bandit regret analysis.

\begin{definition}
    Feature $\vx$ is $\epsilon$-dependent on $\{\vx_1, \ldots, \vx_n\}$ with respect to $\mathcal{F}$ if any pair of functions $\psi, \psi' \in \mathcal{F}$ satisfying $\sqrt{\sum_{k= 1}^n [ \psi(\vx_k)  -  \psi'(\vx_k) ]^2}  \leq \epsilon$ also satisfies $\psi(\vx)  -  \psi'(\vx) \leq \epsilon$. Furthermore, $\vx$ is $\epsilon$-independent of $\{\vx_1, \ldots, \vx_n\}$ with respect to $\mathcal{F}$ if $\vx$ is not $\epsilon$-dependent on $\{\vx_1, \ldots, \vx_n\}$.
\end{definition}
\begin{definition}
    The $\epsilon$-eluder dimension $\subscr{\dim}{E}(\mathcal{F}, \epsilon)$ is the length of the longest sequence of elements in $\Omega$ such that, for some $\epsilon'>\epsilon$, every element is $\epsilon'$-independent of its predecessors.
\end{definition}

\begin{theorem} \label{th: cmbub2}
    Consider latency-aware contextual bandit problem $\mathcal{M} = (P_{\mathrm{env}}, \psi_*)$. Suppose $\psi_* \in \mathcal{F}$ and let $d_T = \subscr{\dim}{E} \Big(\mathcal{F}, \tfrac{l_{\min}}{T n_{\max}}\Big) $. For any $\delta > 0$, with probability at least $1 - 2\delta$, the regret of COAF with parameter $\xi \in (0, 1)$ at any time $T$ satisfies
    \[
    \supscr{R_T}{C} \leq \sqrt{\frac{U_T}{\xi}  + \frac{1}{1-\xi} \Big( \frac{l_{\max}}{l_{\min}} \Big)^{\!3} \tilde{W}_T(\delta)  } +  \sqrt{\frac{ n_{\max}}{l_{\min}} \tilde{W}_T(\delta)} + \frac{n_{\max} l_{\max}}{l_{\min}},
    \]
    where $\tilde{W}_T(\delta) /T  = 4d_T + 1 + 4\left[\tilde\beta \left(\tfrac{T}{n_{\min}}, \mathcal{F},\delta, \alpha \right) + 4n_{\max}\right] d_T \Big(1+\log \big(\tfrac{T n_{\max}}{l_{\min}} \big) \Big)$.
\end{theorem}
\begin{remark}
In~\cref{th: cmbub,th: cmbub2}, the regret upper bounds introduce new terms ${W}_T(\delta)$ and $\tilde{W}_T(\delta)$, which arise from learning a regressor in $\mathcal{F}$. The parameter $\xi$ controls the trade-off between learning $\Gamma_{\mathcal{M}}^*$ and learning the regressor. By setting $\delta = \alpha = 1/T$, the regret upper bounds for COAF become 
$O(d \sqrt{T} \log T)$ and 
$O\Big(\sqrt{\dim_E\!\big(\mathcal{F},\tfrac{1}{T}\big) \, \log \big(N(\mathcal{F}, \tfrac{1}{T},\norm{\cdot}_{\infty})\big) \, T \log T}\Big)$. These rates are consistent with the standard contextual bandit literature~\citep{chu2011contextual,russo2013eluder}.
\end{remark}



\section{Numerical Experiments} \label{sec: experiments}
In this section, we evaluate the performance of COAF in two experiments. In the movie recommendation task, we compare its regret against three baseline algorithms. In the cryo-EM data collection task, we assess its data collection efficiency against human microscopists.

\subsection{Simulations with MovieLens 1M Data}
We use the MovieLens $1$M dataset~\citep{harper2015movielens}, which contains ratings for $3461$ movies by $6040$ users. Missing ratings are predicted via a matrix completion technique~\citep{nie2012low}, and principal component analysis is applied to obtain a 10-dimensional feature vector for each movie. At each round $j$, the number of available movies $\rn_j$ is uniformly sampled from 6 to 20, and the action space $\sA_j = 2^{[\rn_j]}$. The time cost is $\rl_j(A_j) = \rt_j + |A_j|$, where $\rt_j$ is uniformly sampled from 5 to 10. The ground-truth regressor $\psi_*$ is trained on the average ratings across all users, and the noisy reward for each movie is given by the rating from a randomly selected user.

In our experiment, the optimal average reward $\Gamma_{\mathcal{M}}^*$ is computed via prolonged execution of the stochastic approximation algorithm~\citep{robbins1951stochastic}. Each algorithm in~\cref{fig: regret_movie} is run for 2000 iterations, and we plot the mean regret along with the 5\% lower, 90\% middle, and 5\% upper quantiles of empirical regrets in~\cref{fig: regret_movie}. In COAF-TS, we follow~\citet{abeille2017linear} and sample 
$\hat{\theta}_{j}$ from $\mathcal{N}(\Bar{\theta}_{j-1}, \sqrt{\beta(\rs_{j-1}, \delta)}\mV_{j-1}(\lambda))$ as the parameter for $\hat{\psi}_j$ to estimate arm rewards in round $j$. In~\cref{fig: Threshold}, a fixed threshold of 1.75 fails to achieve sublinear regret. In~\cref{fig: COAF-ORC}, COAF-ORC corresponds to the oracle case described in~\cref{lm: coaf_orc}, where the regret reflects only the cost of learning $\Gamma_{\mathcal{M}}^*$. For both COAF and COAF-TS, we set the parameter $\xi = 0.5$. We further observe that the original COAF with UCB estimates outperforms its Thompson sampling variant.

\begin{figure}[h]
     \centering
     \begin{subfigure}[b]{0.245\textwidth}
         \centering
         \includegraphics[width=\textwidth]{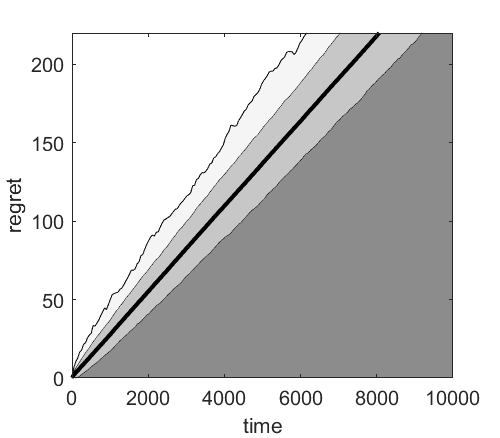}
         \caption{Threshold} \label{fig: Threshold}
     \end{subfigure}
     \hfill
     \begin{subfigure}[b]{0.245\textwidth}
         \centering
         \includegraphics[width=\textwidth]{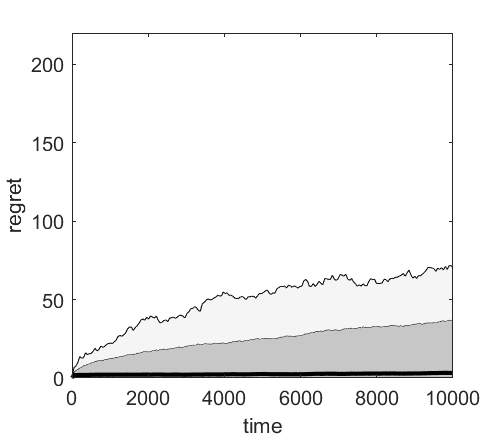}
         \caption{COAF-ORC} \label{fig: COAF-ORC}
     \end{subfigure}
     \hfill
     \begin{subfigure}[b]{0.245\textwidth}
         \centering
         \includegraphics[width=\textwidth]{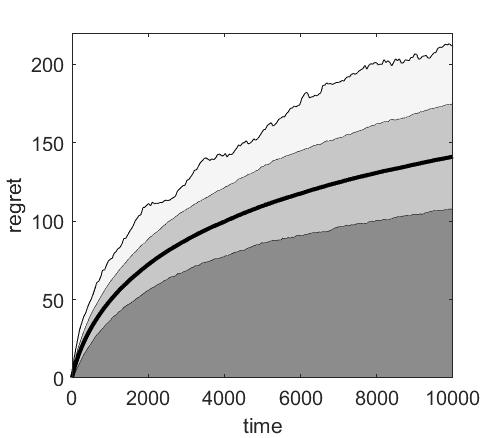}
         \caption{COAF-TS} \label{COAF-TS}
     \end{subfigure}
     \hfill
     \begin{subfigure}[b]{0.245\textwidth}
         \centering
         \includegraphics[width=\textwidth]{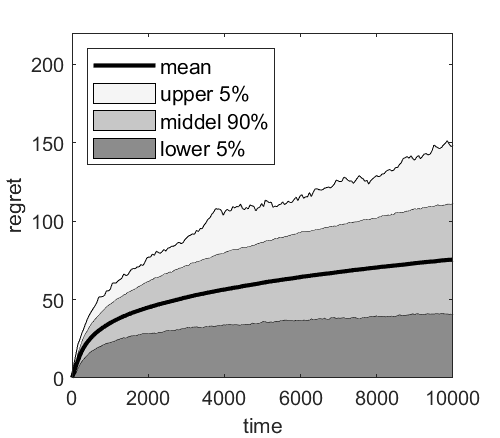}
         \caption{COAF} \label{COAF}
     \end{subfigure}
    \caption{Regrets of COAF and $3$ baselines: (i) Threshold selects movies with rating $\geq1.75$, (ii) COAF-ORC uses the true mean reward, and (iii) COAF-TS employs Thompson Sampling approach.}
    \label{fig: regret_movie}
\end{figure}


\subsection{Cryo-EM Data Collection Simulations}

We evaluate COAF in a realistic cryo-EM setting by benchmarking its automated data collection performance against human microscopists. To simulate experimental conditions, we use two existing datasets. The microscope parameters $\rt_j$, $T_{\mathrm{mov}}$, and $T_{\mathrm{exp}}$ in~\eqref{eq: microscope} are sampled from real experiment logs, with mean values of $12.09$, $51.99$, and $6.66$ seconds, respectively. 

The data collection simulator is illustrated in~\cref{fig: simulator}. In this setup, holes are cropped from medium-magnification images. The feature representation of each hole consists of its mean pixel value, which correlates with ice thickness, together with the output of the deep learning model Ptolemy~\citep{kim2023learning}. Based on these features, the UCB reward estimate is displayed at the top-right corner of each hole. In the first experiment (~\cref{fig: cryoem_exp1}), we use 0/1 reward feedback, where a micrograph is labeled good if its CTF maximum resolution is below $3.8$ \AA. In the second experiment (~\cref{fig: cryoem_exp2}), the reward corresponds to the number of particles in the micrograph, estimated via blob picking. We then reconstruct 3D biomolecular structures with the same computation method using data collected by human microscopists and by COAF.  In both cases, COAF improves data collection efficiency over human microscopists, yielding more good micrographs (particles) and higher-resolution 3D structures. These results highlight the promise of COAF for cryo-EM automation.

\begin{figure}[h]
    \centering
    
    \begin{subfigure}[t]{0.24\textwidth}
        \centering
        \includegraphics[width=\textwidth]{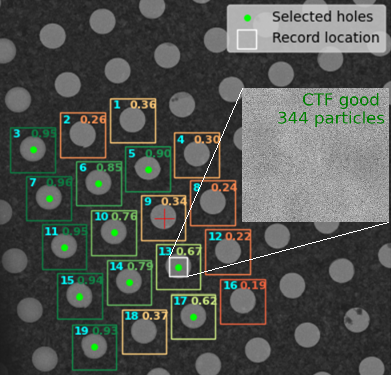}
        \caption{Cryo-EM simulator}\label{fig: simulator}
    \end{subfigure} 
    \hfill
    \begin{subfigure}[t]{0.352\textwidth}
        \centering
        \includegraphics[width=\textwidth]{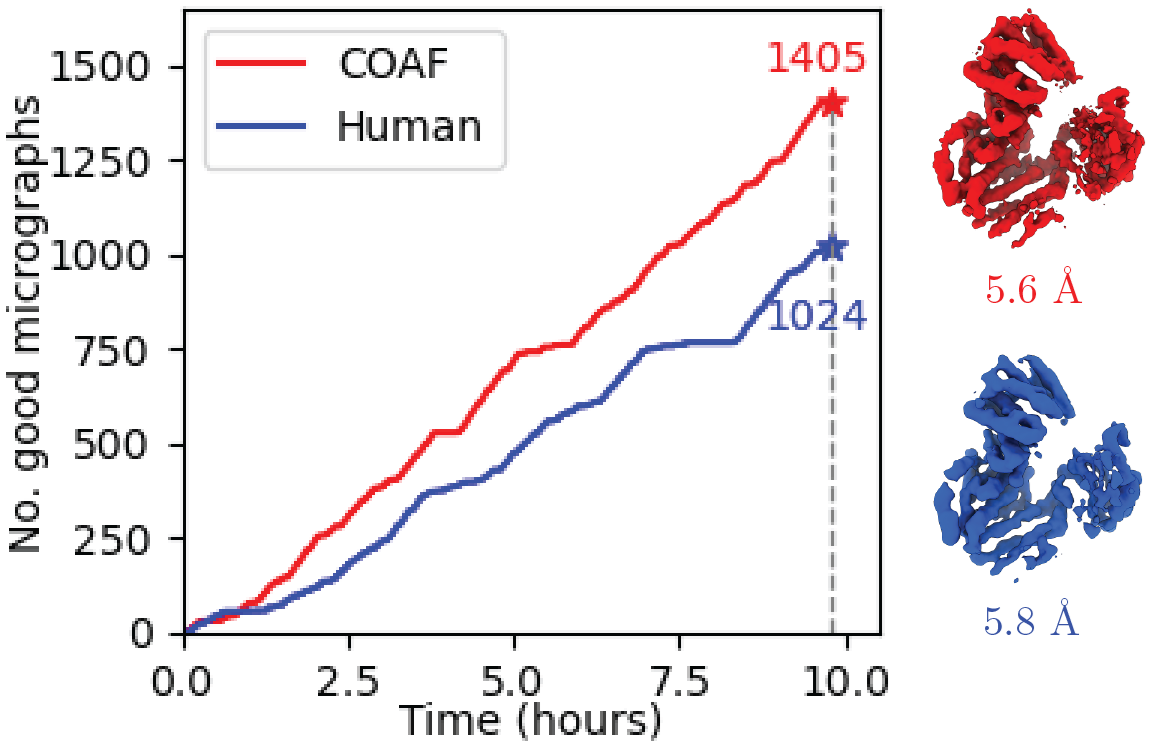}
        \caption{0/1 reward feedback} \label{fig: cryoem_exp1}
    \end{subfigure}
    \hfill
    \begin{subfigure}[t]{0.386\textwidth}
        \centering
        \includegraphics[width=\textwidth]{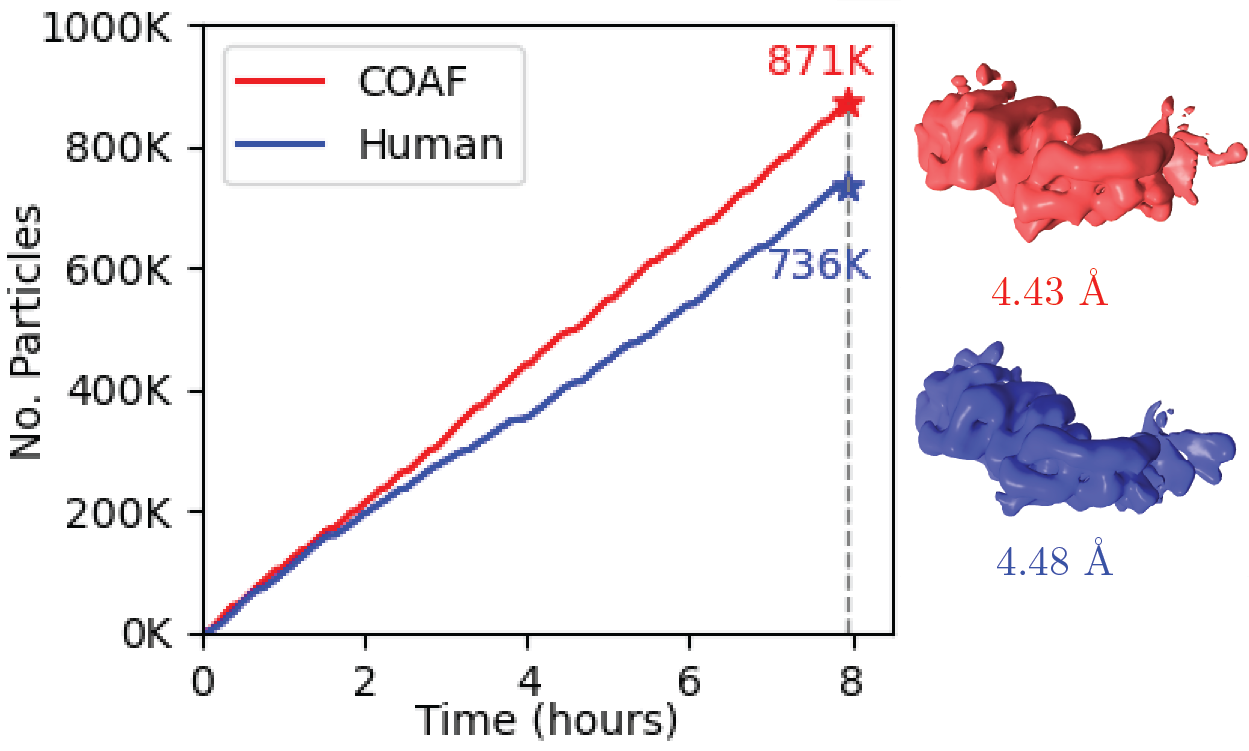}
        \caption{Particle no. feedback} \label{fig: cryoem_exp2}
    \end{subfigure}

    \caption{Setup and results of the cryo-EM data collection experiment.}
    \label{fig:cryoem_sim}
\end{figure}



\section{CONCLUSION}
In this paper, we studied the latency-aware contextual bandit problem, which generalizes both standard contextual bandits and mortal multi-armed bandits to settings where collecting contextual information and taking actions incur time costs. We formulated the problem as a special case of SMDPs with unknown rewards, sojourn times, and transition dynamics, and defined regret relative to an optimal algorithm that maximizes the long-term average reward. Building on stochastic approximation and UCB methods, we proposed the COAF algorithm and established its regret guarantees. Through simulations on movie recommendation tasks and cryo-EM data collection, we demonstrated that COAF efficiently maximizes cumulative reward over time. Our approach provides a fully automated pipeline for cryo-EM data collection and shows promise for broader application in other time-sensitive decision-making scenarios.



\bibliography{iclr2026_conference}
\bibliographystyle{iclr2026_conference}

\newpage
\appendix

\section{Regret of COAF with Access to True Mean Rewards}\label{app: pr_coaf_orc}

\begin{proof}[Proof of~\cref{lm: coaf_orc}]    
    With the regret defined in~\eqref{eqdef: regret}, COAF satisfies
    \begin{align}
        \supscr{R}{C}_T &= T\Gamma^*_{\mathcal{M}} - \expt\Bigg[\sum_{j=1}^{\rk(T)}  \sum_{i \in A_j} \mu_{j,i}\Bigg] \nonumber \\
        &\leq \expt\Bigg[\sum_{j=1}^{\rk(T)} \bigg( \rl_j(A_j) \Gamma^*_{\mathcal{M}} - \sum_{i \in A_j} \mu_{j,i} \bigg ) \Bigg] + \underbrace{l_{\max}\Gamma^*_{\mathcal{M}}}_{(a)} \nonumber\\
        & \leq \expt\Bigg[\sum_{j=1}^{\rk(T)} g(\Gamma^*_{\mathcal{M}}, A_j, \rl_j, \boldsymbol{\mu}_j) \Bigg] + l_{\max}\Gamma_{\max}, \label{preq: reg_bound_1}
    \end{align}
    where term $(a)$ accounts for the potentially unfinished decision round. Note that at each round $j$, with access to the true mean rewards $\boldsymbol{\mu}_j$, COAF selects a subset of arms
    \begin{equation} \label{preq: coaf_orc_sl}
        A_j \in  \argmin_{A \in \sA_j} g(\Gamma_j, A, \rl_j, \boldsymbol{\mu}_j),
    \end{equation}
    where $\Gamma_j$ is the online estimator of $\Gamma^*_{\mathcal{M}}$.
    
    \textbf{Step 1:} In this step, we derive an upper bound for $g(\Gamma^*_{\mathcal{M}}, A_j, \rl_j, \boldsymbol{\mu}_j)$ at each round $j$. Let
    \[
    A_j^* \in \argmin_{A \in \sA_j} g(\Gamma^*_{\mathcal{M}}, A, \rl_j, \boldsymbol{\mu}_j).
    \]
    We consider two cases separately: $\Gamma_j < \Gamma^*_{\mathcal{M}}$ and $\Gamma_j \geq \Gamma^*_{\mathcal{M}}$.
    
    \emph{Case 1:} If $\Gamma_{j} < \Gamma^*_{\mathcal{M}}$, we have
    \begin{align*}
         g(\Gamma^*_{\mathcal{M}}, A_j, \rl_j, \boldsymbol{\mu}_j)=&  \rl_j(A_j) (\Gamma^*_{\mathcal{M}} - \Gamma_{j}) + \rl_j(A_j) \Gamma_{j} - \sum_{i \in A_j} \mu_{j,i} \\
         &\text{according to~\eqref{preq: coaf_orc_sl}}\nonumber\\
        \leq & \rl_j(A_j) (\Gamma^*_{\mathcal{M}} - \Gamma_{j}) + \rl_j(A_j^*) \Gamma_{j} - \sum_{i \in A_j^*} \mu_{j,i} \\
        \leq & \rl_j(A_j) (\Gamma^*_{\mathcal{M}} - \Gamma_{j}) + {\rl_j(A_j^*) \Gamma^*_{\mathcal{M}} - \sum_{i \in A_j^*} \mu_{j,i}} \\
        =& \rl_j(A_j) (\Gamma^*_{\mathcal{M}} - \Gamma_{j}) + g(\Gamma^*_{\mathcal{M}}, A_j^*, \rl_j, \boldsymbol{\mu}_j).
    \end{align*}
        
    \emph{Case 2:} If $\Gamma_{j} \geq \Gamma^*_{\mathcal{M}}$, we have
    \begin{align*}
        g(\Gamma^*_{\mathcal{M}}, A_j, \rl_j, \boldsymbol{\mu}_j) \leq & \rl_j(A_j) \Gamma_j - \sum_{i \in A_j} \mu_{j,i} \\
        &\text{according to~\eqref{preq: coaf_orc_sl}} \\
        \leq & \rl_j(A_j^*) \Gamma_j - \sum_{i \in A_j^*} \mu_{j,i} \\
        = & \rl_j(A_j^*) (\Gamma_j - \Gamma^*_{\mathcal{M}}) +  \rl_j(A_j^*) \Gamma^*_{\mathcal{M}} - \sum_{i \in A_j^*} \mu_{j,i} \\
        = & \rl_j(A_j^*) (\Gamma_j - \Gamma^*_{\mathcal{M}}) + g(\Gamma^*_{\mathcal{M}}, A_j^*, \rl_j, \boldsymbol{\mu}_j).
    \end{align*}
    Putting both cases together, we have
    \[ g(\Gamma^*_{\mathcal{M}}, A_j, \rl_j, \boldsymbol{\mu}_j) \leq l_{\max} \abs{\Gamma_j - \Gamma^*_{\mathcal{M}}} +  g(\Gamma^*_{\mathcal{M}}, A_j^*, \rl_j, \boldsymbol{\mu}_j).\]
    From~\cref{th: MBSP_ub}, we also have $\expt [g(\Gamma^*_{\mathcal{M}}, A_j^*, \rl_j, \boldsymbol{\mu}_j)] = 0$. Applying this result to~\eqref{preq: reg_bound_1} yields
    \begin{equation}\label{preq: coaf_orc_sa}
        \supscr{R}{C}_T \leq \expt\Bigg[\sum_{j=1}^{\rk(T)} l_{\max} \abs{\Gamma_j - \Gamma^*_{\mathcal{M}}} \Bigg] + l_{\max}\Gamma_{\max}.
    \end{equation}
    
    \textbf{Step 2:} With~\eqref{preq: coaf_orc_sa}, we have converted bounding the regret to analyzing the convergence of $\Gamma_j$. To ease the notation, we define
    \begin{equation}\label{preq: def_h}
        h_j(\Gamma) \triangleq \min_{A \in \sA_j} g(\Gamma, A, \rl_j, \boldsymbol{\mu}_j) = \min_{A \in \sA_j} \bigg[ \rl_j(A)\, \Gamma - \sum_{i \in A} \mu_i \bigg].
    \end{equation}
    Let $f_j(\Gamma)$ be a function such that $f'_j(\Gamma) =  h_j(\Gamma)$. Notice that $f''_j(\Gamma) =  h'_j(\Gamma) \geq \min_{A\in \sA_j} \rl_j(A)$, which means $f_j$ is strongly convex with parameter $\rc_j = \min_{A\in \sA_j} \rl_j(A)$. So we have
    \begin{equation}\label{preq: convex}
        f_j(\Gamma_j) \geq f_j(\Gamma^*_{\mathcal{M}}) + (\Gamma_j - \Gamma^*_{\mathcal{M}}) h_j(\Gamma^*_{\mathcal{M}})  + \frac{\rc_j}{2} (\Gamma_j - \Gamma^*_{\mathcal{M}})^{2}.
    \end{equation}
    
    Applying Cauchy–Schwarz inequality, we have
    \begin{align}
        \Bigg( \sum_{j=1}^{\rk(T)} l_{\max} \abs{\Gamma_j - \Gamma^*_{\mathcal{M}}} \Bigg)^2 =& \Bigg( \sum_{j=1}^{\rk(T)} \frac{l_{\max}}{\sqrt{\rc_{j}}} \sqrt{\rc_{j}}\abs{\Gamma_j - \Gamma^*_{\mathcal{M}}}  \Bigg)^2 \nonumber\\        
        \leq& \Bigg[ \sum_{j=1}^{\rk(T)} \Big(\frac{l_{\max}}{\sqrt{\rc_{j}}} \Big)^2 \Bigg] \Bigg[ \sum_{j=1}^{\rk(T)} \rc_{j}(\Gamma_j - \Gamma^*_{\mathcal{M}})^2 \Bigg] \nonumber\\
        &\text{since $\rc_j \geq l_{\min}$} \nonumber\\
        \leq& \Bigg[ \sum_{j=1}^{\rk(T)} \frac{l_{\max}^2}{l_{\min}}  \Bigg] \Bigg[ \sum_{j=1}^{\rk(T)} \rc_{j}(\Gamma_j - \Gamma^*_{\mathcal{M}})^2 \Bigg] \nonumber\\
        &\text{since $\rk(T) \leq {T}/l_{\min}$} \nonumber\\
        \leq& T\Big( \frac{l_{\max}}{l_{\min}} \Big)^2  \Bigg[ \sum_{j=1}^{\rk(T)} \rc_{j}(\Gamma_j - \Gamma^*_{\mathcal{M}})^2 \Bigg]. \label{preq: 1_5_1}
    \end{align}
    We substitute~\eqref{preq: convex} into~\eqref{preq: 1_5_1} to get
    \begin{align}
         \Bigg( \sum_{j=1}^{\rk(T)} l_{\max} \abs{\Gamma_j - \Gamma^*_{\mathcal{M}}}  \Bigg)^2 \leq 2T \Big( \frac{l_{\max}}{l_{\min}} \Big)^2 \sum_{j=1}^{\rk(T)} \Big [  f_j(\Gamma_j) - f_j(\Gamma^*_{\mathcal{M}}) + (\Gamma^*_{\mathcal{M}} - \Gamma_j) h_j(\Gamma^*_{\mathcal{M}}) \Big ]. \label{preq: 1_6}
    \end{align}

    \textbf{Step 3:} In this step, we derive an upper bound on $\sum_{j=1}^{\rk(T)}  f_j(\Gamma_j) - f_j(\Gamma^*_{\mathcal{M}})$ from~\eqref{preq: 1_6}. We follow a standard procedure in analyzing online gradient descent for strongly convex functions~\citep[Sec. 3.3]{hazan2016introduction}. We reproduce it here to make the proof self-contained. Since $f'_j(\Gamma) =  h_j(\Gamma)$ and $f_j$ is strongly convex with parameter $\rc_{j}$, we have
    \begin{equation}
        2 \big[ f_j(\Gamma_j) - f_j(\Gamma^*_{\mathcal{M}}) \big]\leq 2 h_j(\Gamma_j) (\Gamma_j - \Gamma^*_{\mathcal{M}}) - \rc_j (\Gamma_j - \Gamma^*_{\mathcal{M}})^2. \label{preq: 1_7}
    \end{equation}
    With the update rule of $\Gamma_j$ in~\cref{alg: COAF}, we apply the fact $g(\Gamma_j, A, \rl_j, \boldsymbol{\mu}_j) = h_{j}(\Gamma_{j})$ to get
    \begin{align*}
        (\Gamma_{j+1} - \Gamma^*_{\mathcal{M}})^2 &=  \left[ \proj_{[\Gamma_{\min}, \Gamma_{\max}]} \Big( \Gamma_{j} - \frac{1}{\gamma_j} h_{j}(\Gamma_{j}) \Big ) - \Gamma^*_{\mathcal{M}}\right]^2 \leq \left[\Gamma_{j} - \frac{1}{\gamma_j} h_{j}(\Gamma_{j})  - \Gamma^*_{\mathcal{M}}\right]^2 \\
        & = (\Gamma_{j} - \Gamma^*_{\mathcal{M}})^2 + \frac{1}{\gamma_j^2} h_{j}^2(\Gamma_{j}) -  \frac{2}{\gamma_j}  h_{j}(\Gamma_{j}) (\Gamma_{j}  - \Gamma^*_{\mathcal{M}}).
    \end{align*}
    Rearranging the above inequality, we get
    \begin{equation} \label{preq: 1_9}
        2 h_{j}(\Gamma_{j}) (\Gamma_{j}  - \Gamma^*_{\mathcal{M}}) \leq \gamma_{j}(\Gamma_{j} - \Gamma^*_{\mathcal{M}})^2 - \gamma_{j}(\Gamma_{j+1} - \Gamma^*_{\mathcal{M}})^2 + \frac{1}{\gamma_j} h_{j}^2(\Gamma_{j}).
    \end{equation}
    Substituting~\eqref{preq: 1_7} into~\eqref{preq: 1_9} and summing from $j=1$ to ${\rk(T)}$, we have
    \begin{align}
        2\sum_{j=1}^{\rk(T)} f_j(\Gamma_j) - f_j(\Gamma^*_{\mathcal{M}}) \leq & \sum_{j=1}^{\rk(T)} (\Gamma_j - \Gamma^*_{\mathcal{M}})^2 (\gamma_{j} - \gamma_{j-1} - \rc_j) + \sum_{j=1}^{\rk(T)} \frac{1}{\gamma_{j}} h_{j}^2(\Gamma_{j}) \nonumber\\
        &\text{since $\gamma_0 \triangleq 0,  \gamma_j \geq  j l_{\min}$, and $\abs{h_{j}(\Gamma_{j})} \leq G \triangleq n_{\max} \Big(1+\tfrac{l_{\max}}{l_{\min}}\Big)$}\nonumber\\
        \leq& 0 + G^2 \sum_{j=1}^{\rk(T)} \frac{1}{j l_{\min}} \leq \frac{G^2}{l_{\min}} \big[ 1+\log({\rk(T)}) \big]\nonumber\\
        &\text{since $\rk(T) \leq \tfrac{T}{l_{\min}}$}\nonumber\\
        \leq& \frac{G^2}{l_{\min}} \left[ 1+\log\left(\frac{T}{l_{\min}}\right) \right].\label{preq: 1_8}
    \end{align}

    \textbf{Step 4:} By~\cref{th: MBSP_ub}, we have $\mathbb{E}[h_j(\Gamma^*_{\mathcal{M}})] = 0$. Moreover, since $\Gamma_j$ depends only on the history $\mathcal{H}_{j-1}$, the sequence
    $\bigseqdef{\sum_{j=1}^{k} (\Gamma^*_{\mathcal{M}} - \Gamma_j)\, h_j(\Gamma^*_{\mathcal{M}})}{k=1}^\infty$
    forms a martingale. Consequently,
    \begin{equation} \label{preq: martingale}
    \mathbb{E} \Bigg[ \sum_{j=1}^{\rk(T)} (\Gamma^*_{\mathcal{M}} - \Gamma_j)\, h_j(\Gamma^*_{\mathcal{M}}) \Bigg] = 0.
    \end{equation}
    
    Applying~\eqref{preq: martingale} together with~\eqref{preq: 1_8} to~\eqref{preq: 1_6}, we obtain
    \[
    \mathbb{E} \left[ \left( \textstyle\sum_{j=1}^{\rk(T)} l_{\max} \, |\Gamma_j - \Gamma^*_{\mathcal{M}}| \right)^{\! 2} \right]
    \le T \left( \frac{l_{\max}}{l_{\min}} \right)^2 \frac{G^2}{l_{\min}} \left[ 1 + \log\left(\frac{T}{l_{\min}}\right) \right] \triangleq U_T.
    \]    
    Finally, since $\mathbb{E}[\rx] \le \sqrt{\mathbb{E}[\rx^2]}$ holds for any random variable $\rx$, we conclude
    \[
    \supscr{R}{C}_T \le \mathbb{E} \Bigg[ \sum_{j=1}^{\rk(T)} l_{\max} \, |\Gamma_j - \Gamma^*_{\mathcal{M}}| \Bigg] + l_{\max} \Gamma_{\max} \le \sqrt{U_T} + l_{\max} \Gamma_{\max}.
    \]
\end{proof}

\section{Supporting Lemmas for Contextual Bandits} \label{app: cb}

In this section, we review relevant results from the existing contextual bandit literature and extend them for later regret analysis for COAF.

\subsection{Concentration Properties of the Regularized Least Squares}
In the standard contextual bandit setting, the learner selects a single arm at each round $k$. Let $\vx_k$ denote the context and $\ry_k$ the noisy feedback from the chosen arm at round $k$. 
The noise in $\ry_k$ is conditionally sub-Gaussian, as stated in~\cref{ass: subGaussian}. The concentration inequality in this section ensures that the confidence set $\mathcal{F}_k \subseteq \mathcal{F}$ contains $\psi_*$ with high probability for all $k \in \mathbb{N}$. Consequently, the UCB of each arm is, with high probability, no smaller than its true mean reward.

\subsubsection{Concentration Inequality for Linear Regressor} \label{sec: concentration}
In the linear bandit setting, the regularized least-squares estimator is defined as
\[\Bar{\theta}_{k} = V_{k}^{-1}(\lambda) \sum_{j=1}^{k} \vx_{j} \ry_{j}, \; V_{k}(\lambda) = \lambda I + \sum_{j=1}^{k} \vx_{j} \vx_{j}^\top.\]
\begin{lemma}[{\citep[Theorem 20.5]{lattimore2020bandit}}] \label{lm: lcb_cr}
    Let $\delta \in (0, 1)$. Then, with probability at least $1-\delta$, it holds that for all $k \in \natural$,
    \[ \norm{\Bar{\theta}_k - \theta_*}_{V_k(\lambda)} < \sqrt{\lambda} \norm{\theta_*} + \sqrt{2 \log \left(\tfrac{1}{\delta}\right) + \log \left( \tfrac{\det V_k(\lambda)}{\lambda^d} \right)}. \]
    Furthermore, if $\norm{\theta_*} \leq m$, then $P(\exists k \in \natural : \theta_* \notin \mathcal{C}_k) \leq \delta$ with
    \[\mathcal{C}_k  = \biggsetdef{\theta \in \real^d}{\big\lVert{\theta - \Bar{\theta}_{k-1} }\big\rVert_{V_{k-1}(\lambda)} < m\sqrt{\lambda} + \sqrt{2 \log \left(\tfrac{1}{\delta}\right) + \log\left(\tfrac{\det V_{k-1}(\lambda)}{\lambda^d}\right)} }.\]    
\end{lemma}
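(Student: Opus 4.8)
The plan is to prove this as a self-normalized concentration bound for the regularized least-squares estimator, splitting the estimation error into a regularization-bias term and a noise term and controlling the latter with a uniform-in-$k$ martingale bound via the method of mixtures. First I would introduce the noise variables $\eta_i = Y_i - \langle \theta_*, X_i\rangle$, which are conditionally $1$-subgaussian by~\cref{ass: subGaussian}, with $X_i$ predictable with respect to the history $\sigma$-field (call it $\mathcal{H}_{i-1}$, distinct from the regressor class $\mathcal{F}$). Writing $S_k = \sum_{i=1}^k X_i \eta_i$ and $\bar V_k = \sum_{i=1}^k X_i X_i^\top = V_k(\lambda) - \lambda I$, a direct substitution into $\hat\theta_k = V_k(\lambda)^{-1}\sum_{i=1}^k X_i Y_i$ gives the decomposition $\hat\theta_k - \theta_* = V_k(\lambda)^{-1} S_k - \lambda V_k(\lambda)^{-1}\theta_*$. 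The triangle inequality in the $V_k(\lambda)$-norm then yields $\norm{\hat\theta_k - \theta_*}_{V_k(\lambda)} \le \norm{S_k}_{V_k(\lambda)^{-1}} + \lambda \norm{V_k(\lambda)^{-1}\theta_*}_{V_k(\lambda)}$, and the bias term equals $\lambda\sqrt{\theta_*^\top V_k(\lambda)^{-1}\theta_*} \le \sqrt{\lambda}\,\norm{\theta_*}$ using $V_k(\lambda)^{-1}\preceq \lambda^{-1}I$. This part is routine; the entire difficulty is bounding $\norm{S_k}_{V_k(\lambda)^{-1}}$ simultaneously over all $k$.

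Second, the crux is a Laplace / method-of-mixtures argument. For a fixed $\mu\in\real^d$, conditional subgaussianity gives $\expt\big[e^{\eta_i\langle\mu,X_i\rangle}\mid\mathcal{H}_{i-1}\big]\le e^{\langle\mu,X_i\rangle^2/2}$, which makes $M_k^\mu = \exp\big(\langle\mu, S_k\rangle - \tfrac12\norm{\mu}_{\bar V_k}^2\big)$ a nonnegative supermartingale with $M_0^\mu = 1$. I would then integrate $M_k^\mu$ against a centered Gaussian prior $h$ with covariance $\lambda^{-1}I$, forming $\bar M_k = \int M_k^\mu\, h(\mu)\,d\mu$, which by Tonelli remains a nonnegative supermartingale with $\bar M_0 = 1$. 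Completing the square in the Gaussian integral evaluates $\bar M_k$ in closed form to $\big(\lambda^d/\det V_k(\lambda)\big)^{1/2}\exp\big(\tfrac12\norm{S_k}_{V_k(\lambda)^{-1}}^2\big)$.

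Third, I would apply Ville's maximal inequality to the nonnegative supermartingale $\bar M_k$, giving $\prob\big(\sup_k \bar M_k \ge 1/\delta\big)\le\delta$. On the complementary event, taking logarithms of $\bar M_k < 1/\delta$ and rearranging yields, simultaneously for all $k$, the inequality $\norm{S_k}_{V_k(\lambda)^{-1}} < \sqrt{2\log(1/\delta) + \log\big(\det V_k(\lambda)/\lambda^d\big)}$. Combining this with the bias bound from the first step establishes the first displayed inequality with probability at least $1-\delta$. The second statement then follows immediately: substituting $\norm{\theta_*}\le m$ and reindexing—the set $\mathcal{C}'_k$ is built from $\hat\theta_{k-1}$ and $V_{k-1}(\lambda)$, i.e. it is the bound at index $k-1$—shows that $\theta_*\in\mathcal{C}'_k$ for every $k$ on the same high-probability event, so $\prob(\exists k:\theta_*\notin\mathcal{C}'_k)\le\delta$.

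The main obstacle is the second step: verifying that $\bar M_k$ is a genuine supermartingale and evaluating the mixture integral correctly. One must carefully check the single-step supermartingale property of $M_k^\mu$ using conditional subgaussianity together with the predictability of $X_i$, and then justify interchanging conditional expectation with the $\mu$-integral (Tonelli, since the integrand is nonnegative) so that $\bar M_k$ inherits the supermartingale property uniformly in $k$. Once that is in place, the remaining Gaussian completion of the square and the application of Ville's inequality are mechanical.
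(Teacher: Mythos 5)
Your proposal is correct and is essentially the proof of the result the paper invokes by citation: the paper gives no argument of its own for this lemma, and the standard proof in the cited source (Lattimore and Szepesv\'ari, Theorem 20.5, following Abbasi-Yadkori et al.) is exactly your bias-plus-noise decomposition, the method-of-mixtures supermartingale with Gaussian prior $\normal(0,\lambda^{-1}I)$, and Ville's maximal inequality, with the same reindexing from $k$ to $k-1$ for $\mathcal{C}'_k$. The only detail worth adding is that membership $\theta_* \in \mathcal{C}'_k$ also requires $\norm{\theta_*}\leq 1$ (a constraint this paper adds to the book's version of the set), which holds in the paper's setting since $\psi_* \in \mathcal{F}_l$ forces $\norm{\theta_*}\leq 1$.
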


\subsubsection{Concentration Inequality for General Regressor}
For a general regressor class $\mathcal{F}$, recall $N(\mathcal{F}, \alpha, \norm{\cdot}_{\infty})$ denotes its $\alpha$-covering number under the sup-norm $\norm{\cdot}_{\infty}$. The regularized least-squares estimator is $\Bar{\psi}_k \in \argmin_{\psi \in \mathcal{F}} \sum_{j=1}^{k} \bigl( \psi(\vx_{j}) - \ry_{j} \bigr)^2$.
The following result relates the concentration of $\Bar{\psi}_k$ to the $\alpha$-covering number of $\mathcal{F}$.

\begin{lemma}[{\citep[Proposition 2]{russo2013eluder}}]\label{lemma: eluder}
    For any $\delta > 0$ and $\alpha > 0$, with probability at least $1-2\delta$, it holds for all $k \in \natural$ that
    \[\psi_* \in \biggsetdef{\psi \in \mathcal{F}}{\sum_{j=1}^{k-1} [ \psi(\vx_j) - \Bar{\psi}_k(\vx_j) ]^2\leq \tilde\beta(k, \mathcal{F},\delta, \alpha)},\] where
$\tilde\beta (k, \mathcal{F},\delta, \alpha) = 8\log \big( N(\mathcal{F},\alpha,\norm{\cdot}_{\infty})/\delta \big) + 2\alpha k (8 + \sqrt{8 \log (4 k^2/ \delta)}).$
\end{lemma}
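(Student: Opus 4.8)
The plan is to follow the standard least-squares confidence-set argument for general function classes: convert the desired bound into control of a noise cross-term, bound that term for a single fixed function via an exponential supermartingale, and then make the bound uniform over $\mathcal{F}$ by discretization. Write $Y_i = \psi_*(X_i) + \epsilon_i$, where by \cref{ass: subGaussian} each $\epsilon_i$ has zero conditional mean and is conditionally $1$-subgaussian given the past. The first step exploits the optimality of the empirical risk minimizer $\hat{\psi} \in \argmin_{\psi \in \mathcal{F}} \sum_{i=1}^k (\psi(X_i)-Y_i)^2$: from $\sum_i (\hat{\psi}(X_i)-Y_i)^2 \le \sum_i (\psi_*(X_i)-Y_i)^2$, expanding the squares and cancelling the common $\epsilon_i^2$ terms yields
\[ \sum_{i=1}^k \big(\hat{\psi}(X_i)-\psi_*(X_i)\big)^2 \;\le\; 2\sum_{i=1}^k \epsilon_i\big(\hat{\psi}(X_i)-\psi_*(X_i)\big). \]
Hence it suffices to show the right-hand noise cross-term is at most half the left-hand side plus $\beta_k(\mathcal{F},\delta,\alpha)$.

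Second, I would build an exponential supermartingale for each \emph{fixed} $f \in \mathcal{F}$. Setting $D_i = f(X_i)-\psi_*(X_i)$ and using $\expt[e^{2\lambda D_i \epsilon_i}\mid \text{past}] \le e^{2\lambda^2 D_i^2}$, the process $\prod_{i\le k}\exp(2\lambda \epsilon_i D_i - 2\lambda^2 D_i^2)$ is a nonnegative supermartingale with expectation at most $1$, so Ville's inequality gives, with probability $\ge 1-\delta'$ and simultaneously for all $k$,
\[ 2\sum_{i=1}^k \epsilon_i D_i \;\le\; 2\lambda \sum_{i=1}^k D_i^2 + \tfrac{1}{\lambda}\log(1/\delta'). \]
Choosing $\lambda=\tfrac14$ turns the quadratic term into $\tfrac12\sum_i D_i^2$, exactly the factor needed so that, after substitution into the key inequality and rearranging, the $\sum_i D_i^2$ terms cancel to leave $\tfrac12\sum_i \hat{D}_i^2$ on one side.

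The third and crucial step handles the data-dependence of $\hat{\psi}$, since the fixed-$f$ bound cannot be applied to it directly. I would take an $\alpha$-net $\mathcal{F}^\alpha$ of $\mathcal{F}$ in $\norm{\cdot}_{\infty}$ of cardinality $N(\mathcal{F},\alpha,\norm{\cdot}_{\infty})$, apply the supermartingale bound to every net element under a union bound (replacing $\delta'$ by $\delta/N$, which after the $\lambda=\tfrac14$ rearrangement produces the $8\log(N(\mathcal{F},\alpha,\norm{\cdot}_{\infty})/\delta)$ term), and approximate $\hat{\psi}$ by its nearest net point $f^\alpha$ with $\norm{\hat{\psi}-f^\alpha}_\infty\le\alpha$. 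The discretization gap $\sum_i \epsilon_i(\hat{\psi}(X_i)-f^\alpha(X_i))$ is at most $\alpha\sum_i|\epsilon_i|$, and a maximal/tail bound on $\sum_{i\le k}|\epsilon_i|$ for $1$-subgaussian noise — made uniform in $k$ by a further union bound with weights $\propto 1/k^2$ — contributes the $2\alpha k\big(8+\sqrt{8\log(4k^2/\delta)}\big)$ term, with the $8$ coming from the mean-magnitude part and the $\sqrt{8\log(4k^2/\delta)}$ from the tail; the $\alpha^2$-order gaps from replacing $D_i$ by its net counterpart are lower order and absorbed here.

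The main obstacle is precisely this third step: transferring a single-function concentration bound to the entire, possibly infinite, class $\mathcal{F}$ while keeping the discretization error linear in $\alpha$ and correctly propagating two union bounds (over the net and over the time index $k$) so that the conclusion holds simultaneously for all $k\in\natural$. Once the net-concentration contribution and the noise-magnitude contribution are combined, $\lambda=\tfrac14$ is substituted, and the key inequality is rearranged, one obtains $\sum_{i=1}^k(\hat{\psi}(X_i)-\psi_*(X_i))^2 \le \beta_k(\mathcal{F},\delta,\alpha)$ with exactly the stated constants, on an event of probability at least $1-2\delta$, where the factor $2$ accounts for the two separately controlled events.
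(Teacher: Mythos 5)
Your proof is correct and follows essentially the same argument as the paper's source for this result: the paper itself offers no proof, citing \cite[Proposition 2]{russo2013eluder}, whose proof is exactly your three-step scheme (ERM optimality reducing the squared error to the noise cross-term, a fixed-function exponential supermartingale with Ville's inequality at $\lambda = \tfrac14$, and a union bound over an $\alpha$-net plus a uniform-in-$k$ noise-magnitude bound, with the two events giving $1-2\delta$). Your constant accounting also checks out, with the minor cosmetic caveat that the $8$ in the $2\alpha k(8+\sqrt{8\log(4k^2/\delta)})$ term absorbs the bounded-function slack terms (e.g., $2\alpha\sum_i\lvert\hat{D}_i - D_i^\alpha\rvert$-type pieces bounded via $\abs{\psi(X)}\leq 1$) rather than the mean noise magnitude per se.
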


\subsection{Bounds for Cumulative Prediction Error}
We report the upper bound on the cumulative prediction error of the estimated rewards. Combined with the results in~\cref{sec: concentration}, this error is shown to be small with high probability.

\subsubsection{Cumulative Prediction Error with Linear Regressor Class}
With a linear regressor class, the upper bound on the cumulative prediction error depends on the feature dimension $d$. The following result builds on the theory of self-normalized processes, and the version for standard contextual bandits appears in~\citet[Lemma 19.4]{lattimore2020bandit}.

\begin{lemma} \label{lm: LCB}
    Let $\mV_0 \in \real^{d \times d}$ be positive definite and $\mV_k = \mV_0 + \sum_{j=1}^k \sum_{i \in A_j} \left(1 + \rvx_{j,i}^\top \mV_{j-1}^{-1} \rvx_{j,i}\right)$. If $\norm{\rvx_{j,i}}\leq L<\infty$ for all $i \in [\rn_j]$ and $j \in \natural$, then
    \[\sum_{j=1}^k \sum_{i \in A_j} \min \left(1, \rvx_{j,i}^\top \mV_{j-1}^{-1} \rvx_{j,i}\right) \leq 2 \log \bigg(\frac{\det \mV_k}{\det \mV_0}\bigg) \leq 2d \log \bigg( \frac{\trace \mV_0 + n_{\max} k L^2}{d\det(\mV_0)^{1/d}}\bigg).\]
\end{lemma}

\begin{proof}
The matrix determinant lemma, which states that for any vector $x$ and positive definite $\mV$,
\[
\det(\mV + \vx \vx^\top) = \det(\mV) (1 + \vx^\top \mV^{-1} \vx).
\]
Applying this iteratively for all $\rvx_{j,i}$ gives
\[
\det(\mV_k) = \det(\mV_0) \prod_{j=1}^k \prod_{i \in A_j} \left(1 + \rvx_{j,i}^\top \mV_{j-1}^{-1} \rvx_{j,i}\right).
\]

Taking logarithms and using $2\log(1 + x) \ge \min(x,1)$ for any $x \ge 0$, we have
\[
 2 \log \bigg(\frac{\det \mV_k}{\det \mV_0}\bigg) = 2 \sum_{j=1}^k \sum_{i \in A_j} \log \left(1 + \rvx_{j,i}^\top \mV_{j-1}^{-1} \rvx_{j,i}\right)
\le \sum_{j=1}^k \sum_{i \in A_j} \min \left(1, \rvx_{j,i}^\top \mV_{j-1}^{-1} \rvx_{j,i}\right).
\]

Finally, using the trace-determinant inequality for positive definite matrices:
\[
\det(\mV_k) \le \left(\frac{\mathrm{trace}(\mV_k)}{d}\right)^d \le \left(\frac{\mathrm{trace}(V_0) + n_{\max} k L^2}{d}\right)^d,
\]
we obtain
\[
\sum_{j=1}^k \sum_{i \in A_j} \rvx_{j,i}^\top \mV_{j-1}^{-1} \rvx_{j,i} 
\le 2 \log \bigg(\frac{\det \mV_k}{\det \mV_0}\bigg) \leq 2d \log \bigg( \frac{\trace \mV_0 + n_{\max} k L^2}{d\det(\mV_0)^{1/d}}\bigg).
\]
\end{proof}

\begin{lemma} \label{lm: LCB_sw}
    Let $\beta_1, \beta_2, \ldots$ be a nondecreasing sequence with $\beta_1 \geq 1$ and let
    \begin{equation*}\label{eqdef: conf}
        \mathcal{C}_j = \Bigsetdef{\theta \in \real^d}{\big\lVert{\theta - \Bar{\theta}_{j-1} }\big\rVert^2_{\mV_{j-1}} \leq \beta_j, \norm{\theta} \leq 1}.
    \end{equation*}
    For each $j \in \natural$, take an arbituary pair $\theta_j, \theta_j' \in \mathcal{C}_j$. If $\norm{\rvx_{j,i}}\leq 1$ for all $i \in [\rn_j]$ and $j \in \natural$, then
    \[\sum_{j = 1}^{k} \sum_{i \in A_j} \langle \theta_j - \theta_j', \rvx_{j, i}\rangle^2 \leq 8d \beta_k \log \bigg( \frac{\trace \mV_0 + n_{\max} k}{d\det(\mV_0)^{1/d}}\bigg).\]
    Furthermore, if $\mV_0 = \lambda I$,
    \[\sum_{j = 1}^{k} \sum_{i \in A_j} \langle \theta_j - \theta_j', \rvx_{j, i}\rangle^2 \leq 8d \beta_k\log \left( \tfrac{d \lambda + n_{\max} k}{d\lambda}\right).\]
\end{lemma}
\begin{proof}
    For a vectore $\vx\in \real^d$ and a positive definite matrix $\mV \in \real^{d\times d}$, recall $\norm{\vx}_{\mV} = \sqrt{\vx^\top \mV \vx}$. Since $\theta_j, \theta_j' \in \mathcal{C}_j$, for each $i \in [\rn_j]$, we have
    \[\langle \theta_j - \theta_j', \rvx_{j, i}\rangle \leq \norm{\rvx_{j,i}}_{\mV_{j-1}^{-1}} \norm{\theta_j - \theta_j'}_{\mV_{j-1}} \leq 2 \norm{\rvx_{j,i}}_{\mV_{j-1}^{-1}} \sqrt{\beta_j}.\]
    With $ \big\lVert{\theta_j}\big\rVert, \big\lVert{\theta_j'}\big\rVert$ and $\norm{\rvx_{j, i}}$ all upper bounded by $1$, we have $\abs{\langle \theta_j - \theta_j', \rvx_{j, i}\rangle}\leq 2$. Combing this result with $\beta_k \geq \beta_j\geq 1$, we have
    \[ \langle \theta_j - \theta_j', \rvx_{j, i}\rangle \leq \min \big(2 , 2 \norm{\rvx_{j, i}}_{V_{j-1}^{-1}} \sqrt{\beta_j}\big) \leq 2 \sqrt{\beta_k} \min\big(1,\norm{\rvx_{j, i}}_{\mV_{j-1}^{-1}}\big).\]
    We use the above inequality and apply~\cref{lm: LCB} to get
    \[\sum_{j = 1}^{k} \sum_{i \in A_j} \langle \theta_j - \theta_j', \rvx_{j, i}\rangle^2\leq 4 \beta_k \sum_{j = 1}^{k} \sum_{i\in A_j} \min\big(1,\norm{\rvx_{j, i}}_{\mV_{j-1}^{-1}}\big) \leq 8d  \beta_k\log \left( \frac{\trace \mV_0 + n_{\max} k}{d\det(\mV_0)^{1/d}}\right).\]
\end{proof}

\subsubsection{Cumulative Prediction Error with General Regressor Class}

For an abstract regressor class $\mathcal{F}$, the upper bound of the cumulative prediction error depends on its eluder dimension $\subscr{\dim}{E}(\mathcal{F}, \epsilon)$.  
For any subset $\tilde{\mathcal{F}} \subseteq \mathcal{F}$, its width at a context $\vx$ is defined as
\[w_{\tilde{\mathcal{F}}}(\vx) \triangleq \sup_{\psi,\psi'\in \tilde{\mathcal{F}}} \psi(\vx) - \psi'(\vx).\] 

\textbf{Single arm selection:} The following results apply to the standard contextual bandit setting, in which a single arm is selected from each decision set.
\begin{lemma}[{\citep[Proposition 3]{russo2013eluder}}] \label{lm: CBRO}
Let $\vx_1, \ldots, \vx_n \in \real^d$ be a sequence of features, and let $\beta_1, \ldots, \beta_n$ be a nondecreasing sequence. For each $k \in [n]$ and an arbitrary $\psi'_k \in \mathcal{F}$, define $\mathcal{F}_k \triangleq \bigsetdef{\psi \in \mathcal{F}}{\sum_{j=1}^{k-1} [ \psi(\vx_j) - {\psi'}_{k}(\vx_j) ]^2\leq \beta_{k}}$. Then, for any $\epsilon > 0$,
    \[\sum_{k=1}^n \indicator{w_{\mathcal{F}'_k}(\vx_k)>\epsilon} \leq \left(\frac{4\beta_n}{\epsilon^2}+1 \right) \subscr{\dim}{E}(\mathcal{F},\epsilon).\]
\end{lemma}

\begin{lemma} \label{lm: CBRO_sw}
Let $\vx_1,\ldots,\vx_n \in \real^{d}$ be a sequence of features, and let $\beta_1,\ldots,\beta_n$ be a nondecreasing sequence. For each $k \in [n]$ and an arbitrary $\psi'_k \in \mathcal{F}$, define
$\mathcal{F}_k \triangleq \bigsetdef{\psi \in \mathcal{F}}{\sum_{i=1}^{k-1} [ \psi(\vx_i) - {\psi}_k (\vx_i) ]^2\leq \beta_{n}}$. Then the widths $w_{\mathcal{F}_1} (\vx_1), \ldots, w_{\mathcal{F}_n} (\vx_n)$ satisfy
\[\sum_{k = 1}^{n} w^2_{\mathcal{F}_k} (\vx_k) \leq 4\subscr{\dim}{E} \left(\mathcal{F}, \tfrac{1}{n}\right) + 1 + 4\beta_n \subscr{\dim}{E} \left(\mathcal{F}, \tfrac{1}{n}\right) (1+\log n).\]
\end{lemma}
\begin{proof}
   For the ease of notation, let $w_{\mathcal{F}_k}(\vx_k) = w_k$. We rearrange the sequence $w_1,\ldots,w_n$ by defining a sequence $k_1, \ldots, k_n$ such that $w_{k_1}\geq w_{k_2}\geq \ldots \geq w_{k_n}$. For any $w_{k_{i+1}} > \frac{1}{n}$, there are at most $i$ values in $w_1,\ldots,w_n$ that is greater $w_{k_{i+1}}$, and we apply~\cref{lm: CBRO} to get
   \begin{align*}
       i &\leq \sum_{k=1}^n  \indicator{ w_{\mathcal{F}_k}(\vx_{k})> w_{k_{i+1}}} \\
       &\leq \bigg(\frac{4\beta_n}{w_{k_{i+1}}^2}+1 \bigg) \subscr{\dim}{E}(\mathcal{F},w_{k_{i+1}}) \leq \bigg(\frac{4\beta_n}{w_{k_{i+1}}^2}+1\bigg) \subscr{\dim}{E} \left(\mathcal{F}, \tfrac{1}{n}\right),
   \end{align*}
    where the last inequality is due to that $\subscr{\dim}{E} (\mathcal{F}, \epsilon)$ is a nonincreasing function of $\epsilon$. Thus, for any $i > m = \subscr{\dim}{E}(\mathcal{F}, \frac{1}{n})$, we rearrange the above inequality to get 
    \[ w_{k_{i+1}}^2 \leq \frac{4\beta_n m}{i-m}, \quad \text{ if }  w_{k_{i+1}} > \frac{1}{n}.\]    
    Moreover, since the range of each $\psi \in \mathcal{F}$ is contained in $[-1, 1]$, $w_k \leq 2$ for all $k \in [n]$. Thus,
    \begin{align*}
        \sum_{k = 1}^{n} w_k^2 & = \sum_{i = 1}^{n} w_{k_i}^2 \leq 4m + \sum_{i=m+1}^{n} w_{k_i}^2 \indicator{w_{k_i} \leq \frac{1}{n}} + \sum_{i = m+1}^{n} w_{k_i}^2 \indicator{w_{k_i} > \frac{1}{n}} \\
        &< 4m + \frac{1}{n} + \sum_{i = m+1}^{n} \frac{4\beta_n m}{i-m} \leq 4m + 1 + 4\beta_n m (1+\log n).
    \end{align*} 
\end{proof}

\textbf{Multiple arm selection:} When multiple arms are selected in each round, we define 
\[\mathcal{F}_k \triangleq \biggsetdef{\psi \in \mathcal{F}}{\sum_{j=1}^{k-1} \sum_{i \in A_j} [ \psi(\rvx_{j, i}) - {\psi'}_k(\rvx_{j, i}) ]^2\leq \beta_{k}}.\]
To see the difference, we show the feature sequence below:
\[\underbrace{\rvx_{1,1}, \ldots, \rvx_{1, \abs{A_1}}, \ldots, \rvx_{j-1,1}, \ldots ,\rvx_{j-1, \abs{A_{j-1}}}}_{(a)}, \underbrace{\rvx_{j,1}, \ldots, \rvx_{j,i-1}}_{(b)}, \underset{\uparrow}{\rvx_{j,i}}, \ldots , \rvx_{n,1}, \ldots ,\rvx_{j-1, \abs{A_{n}}}.\]
When evaluating the width at $\rvx_{j,i}$, the set $\mathcal{F}_j$ is constructed by imposing constraints only on the features in $(a)$; the features in $(b)$ are not incorporated. For each $i \in A_j$ and $j \in \natural$, define
\[\supscr{\mathcal{F}}{m}_{j,i} \triangleq  \biggsetdef{\psi \in \mathcal{F}}{\sum_{\nu = 1}^{j-1} \sum_{\iota \in A_{\nu}} [ \psi(\rvx_{\nu, \iota}) - {\psi'}_k(\rvx_{\nu, \iota}) ]^2 + \sum_{\iota=1}^{i-1} [ \psi(\rvx_{j, \iota}) - {\psi'}_k(\rvx_{j, \iota}) ]^2\leq \beta_{j} + n_{\max}}.\]
With~\cref{lm: CBRO}, we obtain
\[\sum_{j = 1}^n \sum_{i \in A_j} \indicator{\textstyle w_{\supscr{\mathcal{F}}{m}_{j,i}}(\rvx_{j, i})>\epsilon} \leq \left[\frac{4(\beta_n + 4 n_{\max}) }{\epsilon^2}+1 \right] \subscr{\dim}{E} \left(\mathcal{F}, \epsilon\right).\]
Since $\sum_{\iota = 1}^{i -1} [ \psi(\rvx_{j, \iota}) - {\psi'}_j(\rvx_{j, \iota}) ]^2 \leq 4 n_{\max}$, we also have $\supscr{\mathcal{F}}{m}_{j,i} \supseteq \mathcal{F}_j$.  We get the following corollary.
\begin{corollary}\label{crl: CBRO}
    let $\beta_1,\ldots, \beta_n$ be a nondecrasing sequence, then for any $\epsilon > 0$,
    \[\sum_{j = 1}^n \sum_{i \in A_j} \indicator{\textstyle w_{\mathcal{F}_j}(\rvx_{j, i})>\epsilon} \leq \left[\frac{4(\beta_n + 4 n_{\max}) }{\epsilon^2}+1 \right] \subscr{\dim}{E} \left(\mathcal{F}, \epsilon\right).\]
\end{corollary}
Applying the same steps that lead from~\cref{lm: CBRO} to~\cref{lm: CBRO_sw}, we derive the following corollary from~\cref{crl: CBRO}.
\begin{corollary}\label{crl: CBRO_sw}
    Let $\beta_1,\ldots, \beta_n$ be a nondecreasing sequence. For each $k \in [n]$,  let \[\mathcal{F}_k \triangleq \Bigsetdef{\psi \in \mathcal{F}}{\textstyle \sum_{j=1}^{k-1} \sum_{i \in A_j} [ \psi(\rvx_{j, i}) - \Bar{\psi}_k(\rvx_{j, i}) ]^2\leq \beta_{k}},\]
    and let $\rs_n = \sum_{j=1}^n \abs{A_j}$, then
    \[\sum_{j = 1}^n \sum_{i \in A_j} w^2_{\mathcal{F}_j} (\rvx_{j,i}) \leq 4\subscr{\dim}{E} \left(\mathcal{F}, \frac{1}{\rs_n}\right) + 1 + (4\beta_n + 4n_{\max}) \subscr{\dim}{E} \left(\mathcal{F}, \frac{1}{\rs_n}\right) (1+\log (\rs_n)).\]
\end{corollary}

\section{Regret Analysis for COAF} \label{app: CMB}
The following result provides a decomposition of the regret for COAF, isolating the components that arise from learning the mean reward function using online feedback.
\begin{lemma}\label{lm: COAF}
If $\psi_* \in \mathcal{F}_j$ for all $j \in \natural$, the regret for COAF satisfies
\begin{align*}
    \supscr{R}{C}_T \leq& \,\sqrt{ \tfrac{U_T}{\xi} + \underbrace{\tfrac{T}{1-\xi} \left( \tfrac{l_{\max}}{l_{\min}} \right)^3 \expt \left[ \textstyle\sum_{j=1}^{\rk(T)} \sum_{i \in A_j} (\hat{\mu}_{j, i} - \mu_{j,i})^2\right]}_{(a)}} \\
    +& \underbrace{ \sqrt{\textstyle \tfrac{T n_{\max}}{l_{\min}} \expt \left[\sum_{j=1}^{\rk(T)} \sum_{i \in A_j} (\hat{\mu}_{j, i} - \mu_{j,i})^2\right]}}_{(b)} + l_{\max}\Gamma_{\max},
\end{align*}
where $U_T \triangleq \frac{T}{l_{\min}} \big( \tfrac{l_{\max} n_{\max}}{l_{\min}} \big)^2 \big(1+\tfrac{l_{\max}}{l_{\min}}\big)^2 \Big[ 1+\log\big(\tfrac{T}{l_{\min}}\big) \Big]$.
\end{lemma}
Components (a) and (b) arise from the use of UCB estimates in place of the true mean rewards. The remaining terms coincide with those in the oracle case, as established in~\cref{lm: coaf_orc}.

\begin{proof}
In~\eqref{preq: reg_bound_1}, we have shown
\begin{equation}\label{preq: reg_bound_2}
    \supscr{R}{C}_T = \expt\Bigg[\sum_{j=1}^{\rk(T)} g(\Gamma^*_{\mathcal{M}}, A_j, \rl_j, \boldsymbol{\mu}_j) \Bigg] + l_{\max}\Gamma_{\max}.
\end{equation}
At each round $j$, with estimated arm rewards $\hat{\boldsymbol{\mu}}_j$, COAF selects a subset of arms
\begin{equation} \label{preq: coaf_sl}
    A_j \in \argmin_{A \in \sA_j} g(\Gamma_j, A, \rl_j, \hat{\boldsymbol{\mu}}_j).
\end{equation}
Correspondingly, the optimal arm selection is defined as
\[
    A^*_j \in \argmin_{A \in \sA_j} g(\Gamma^*_{\mathcal{M}}, A, \rl_j, \boldsymbol{\mu}_j).
\]

\textbf{Step 1:} To bound $g(\Gamma^*_{\mathcal{M}}, A_j, \rl_j, \boldsymbol{\mu}_j)$, we consider two seperate cases: $\Gamma_{j} < \Gamma^*_{\mathcal{M}}$ and $\Gamma_{j} \geq \Gamma^*_{\mathcal{M}}$.

    \emph{Case 1:} If $\Gamma_{j} < \Gamma^*_{\mathcal{M}}$, then 
    \begin{align*}
        g(\Gamma^*_{\mathcal{M}}, A_j, \rl_j, \boldsymbol{\mu}_j)=&\rl_j(A_j) \Gamma^*_{\mathcal{M}} - \sum_{i \in A_j} \mu_{j,i}  \\
        =& \rl_j(A_j) (\Gamma^*_{\mathcal{M}} - \Gamma_{j}) + \rl_j(A_j) \Gamma_{j} - \sum_{i \in A_j} \hat{\mu}_{j,i} + \sum_{i \in A_j} (\hat{\mu}_{j, i} - \mu_{j,i})\\
         &\text{according to~\eqref{preq: coaf_sl}}\\
        \leq& \rl_j(A_j) (\Gamma^*_{\mathcal{M}} - \Gamma_{j}) + \rl_j(A^*_j) \Gamma_{j} - \sum_{i \in A_j^*} \hat{\mu}_{j,i}  + \sum_{i \in A_j} (\hat{\mu}_{j, i} - \mu_{j,i}). \\
    \end{align*}
    Moreover, if $\psi_* \in \mathcal{F}_j$, then for each arm $i \in [\rn_j]$, the UCB estimate satisfies 
    \[\hat{\mu}_{j,i} = \max_{\psi \in \mathcal{F}_{j}} \psi(\mathbf{x}_{j,i})\geq \mu_{j,i}.\]
    Hence, we obtain
    \begin{align*}      
        g(\Gamma^*_{\mathcal{M}}, A_j, \rl_j, \boldsymbol{\mu}_j) &\leq \rl_j(A_j) (\Gamma^*_{\mathcal{M}} - \Gamma_{j}) + \rl_j(A_j^*) \Gamma^*_{\mathcal{M}}  -  \sum_{i \in A_j^*} \mu_{j,i} + \sum_{i \in A_j} (\hat{\mu}_{j, i} - \mu_{j,i}) \\
        &=\rl_j(A_j) (\Gamma^*_{\mathcal{M}} - \Gamma_{j}) + g(\Gamma^*_{\mathcal{M}}, A_j^*, \rl_j, \boldsymbol{\mu}_j) + \sum_{i \in A_j} (\hat{\mu}_{j, i} - \mu_{j,i}).
    \end{align*}
    
    \emph{Case 2: } If $\Gamma_{j} \geq \Gamma^*_{\mathcal{M}}$, we get
    \begin{align*}
        g(\Gamma^*_{\mathcal{M}}, A_j, \rl_j, \boldsymbol{\mu}_j) \leq& \rl_j(A_j) \Gamma^*_{\mathcal{M}} - \sum_{i \in A_j} \mu_{j,i} \\
        \leq& \rl_j(A_j) \Gamma_j - \sum_{i \in A_j} \hat{\mu}_{j,i} + \sum_{i \in A_j} (\hat{\mu}_{j, i} - \mu_{j,i}) \\
        &\text{according to~\eqref{preq: coaf_sl}}\\
        \leq & \rl_j(A_j^*) \Gamma_j - \sum_{i \in A_j^*} \hat{\mu}_{j,i} + \sum_{i \in A_j} (\hat{\mu}_{j, i} - \mu_{j,i}) \\
        &\text{since as $\hat{\mu}_{j,i} \geq \mu_{j,i}$ for all $i$ }\\
        \leq & \rl_j(A_j^*)( \Gamma_j - \Gamma^*_{\mathcal{M}}) + \rl_j(A_j^*) \Gamma^*_{\mathcal{M}} - \sum_{i \in A_j^*} \mu_{j,i}  + \sum_{i \in A_j} (\hat{\mu}_{j, i} - \mu_{j,i})\\
        =& \rl_j(A_j^*)( \Gamma_j - \Gamma^*_{\mathcal{M}}) + g(\Gamma^*_{\mathcal{M}}, A_j^*, \rl_j, \boldsymbol{\mu}_j)  + \sum_{i \in A_j} (\hat{\mu}_{j, i} - \mu_{j,i}).
    \end{align*}
    With maximum latency $l_{\max}$, we combine both cases together to get
    \[ g(\Gamma^*_{\mathcal{M}}, A_j, \rl_j, \boldsymbol{\mu}_j) \leq l_{\max} \abs{\Gamma_j - \Gamma^*_{\mathcal{M}}} +  g(\Gamma^*_{\mathcal{M}}, A_j^*, \rl_j, \boldsymbol{\mu}_j) + \sum_{i \in A_j} (\hat{\mu}_{j, i} - \mu_{j,i}).\]
It follows from~\cref{th: MBSP_ub} that $\expt [g(\Gamma^*_{\mathcal{M}}, A_j^*, \rl_j, \boldsymbol{\mu}_j)] = 0$. We apply this result to~\eqref{preq: reg_bound_2} to get
    \begin{equation}\label{preq: coaf_sa}
        \supscr{R}{C}_T \leq \expt\Bigg[\sum_{j=1}^{\rk(T)} l_{\max} \abs{\Gamma_j - \Gamma^*_{\mathcal{M}}} + \sum_{i \in A_j} (\hat{\mu}_{j, i} - \mu_{j,i})\Bigg] + l_{\max}\Gamma_{\max}.
    \end{equation}

    \textbf{Step 2:} Recall $h_j(\Gamma)$ defined in~\eqref{preq: def_h}. We define its counterpart with UCB estimates 
    \[\hat{h}_j(\Gamma) = \min_{A \in \sA_j} g(\Gamma, A, \rl_j, \hat{\boldsymbol{\mu}}_j).\]     
    Then we have the following
    \begin{align}
        h_j(\Gamma_j)  =& \min_{A \in \sA_j} \Bigg[ \rl_j(A)\, \Gamma - \sum_{i \in A} \mu_i \Bigg] \nonumber \\
        \leq& \rl_j(A_j) \Gamma_j - \sum_{i \in A_j} \mu_{j, i}  =  \rl_j(A_j) \Gamma_j - \sum_{i \in A_j} \hat{\mu}_{j, i} + \sum_{i \in A_j} (\hat{\mu}_{j, i} - \mu_{j, i}) \nonumber \\
         = & \hat{h}_j(\Gamma_j) + \sum_{i \in A_j} (\hat{\mu}_{j, i} - \mu_{j,i}). \label{preq: 2_5}
    \end{align}
     Let $f_j(\Gamma)$ be such that $f'_j(\Gamma) =  h_j(\Gamma)$. In step 2 of the proof for~\cref{lm: coaf_orc}, we have shown
     \begin{equation}\label{preq: 2_6}
         \Bigg( \sum_{j=1}^{\rk(T)} l_{\max} \abs{\Gamma_j - \Gamma^*_{\mathcal{M}}}  \Bigg)^2 \leq 2T \Big( \frac{l_{\max}}{l_{\min}} \Big)^2 \sum_{j=1}^{\rk(T)} \Big [  f_j(\Gamma_j) - f_j(\Gamma^*_{\mathcal{M}}) + (\Gamma^*_{\mathcal{M}} - \Gamma_j) h_j(\Gamma^*_{\mathcal{M}}) \Big ].
     \end{equation}
    In later steps of the proof, we use $\hat{h}_j$ to bound the above term.
   
    \textbf{Step 3:}  We continue to give an upper bound on $\sum_{j=1}^{\rk(T)}  f_j(\Gamma_j) - f_j(\Gamma^*_{\mathcal{M}})$. Since $f_j$ is strongly convex with parameter $\rc_j$,
    \begin{align}
        &2 \big[ f_j(\Gamma_j) - f_j(\Gamma^*_{\mathcal{M}}) \big] \nonumber\\
        \leq& 2 h_j(\Gamma_j) (\Gamma_j - \Gamma^*_{\mathcal{M}}) - \rc_j (\Gamma_j - \Gamma^*_{\mathcal{M}})^2 \nonumber\\
        =& 2 \hat{h}_j(\Gamma_j) (\Gamma_j - \Gamma^*_{\mathcal{M}}) + 2 \big[ h_j(\Gamma_j) - \hat{h}_j(\Gamma_j) \big ] (\Gamma_j - \Gamma^*_{\mathcal{M}})  - \rc_j (\Gamma_j - \Gamma^*_{\mathcal{M}})^2 \nonumber \\
        &\text{since $2 \big[ h_j(\Gamma_j) - \hat{h}_j(\Gamma_j) \big ] (\Gamma_j - \Gamma^*_{\mathcal{M}}) \leq \tfrac{1}{a_j}\big[ h_j(\Gamma_j) - \hat{h}_j(\Gamma_j) \big ]^2 + a_j (\Gamma_j - \Gamma^*_{\mathcal{M}})^2$} \nonumber\\
        \leq & 2 \hat{h}_j(\Gamma_j) (\Gamma_j - \Gamma^*_{\mathcal{M}}) + \frac{1}{a_j} \big[ h_j(\Gamma_j) - \hat{h}_j(\Gamma_j) \big ]^2  + (a_j - \rc_j) (\Gamma_j - \Gamma^*_{\mathcal{M}})^2, \label{preq: 2_7}       
    \end{align}
    for some $a_j > 0$ to be chosen in later steps. Using~\eqref{preq: 2_5} and applying the Cauchy–Schwarz inequality, we obtain
    \begin{align}
        \big[ h_j(\Gamma_j) - \hat{h}_j(\Gamma_j) \big ]^2 &\leq \bigg[\sum_{i \in A_j} (\hat{\mu}_{j, i} - \mu_{j,i}) \bigg]^2 \nonumber\\
        &\leq \abs{A_j} \sum_{i \in A_j} (\hat{\mu}_{j, i} - \mu_{j,i})^2 \leq l_{\max} \sum_{i \in A_j} (\hat{\mu}_{j, i} - \mu_{j,i})^2. \label{preq: 2_8}
    \end{align}
    In COAF as presented in~\cref{alg: COAF}, $\Gamma_{j+1} = \proj_{[\Gamma_{\min}, \Gamma_{\max}]} \big[ \Gamma_{j} - \frac{1}{\xi \gamma_{j}} g(\Gamma_j, A_j, \rl_j, \hat{\boldsymbol{\mu}}_j)  \big]$. Then we apply the fact that $g(\Gamma_j, A_j, \rl_j, \hat{\boldsymbol{\mu}}_j) = \hat{h}_{j}(\Gamma_{j})$ to get
    \begin{align*}
        (\Gamma_{j+1} - \Gamma^*_{\mathcal{M}})^2 &= \left[ \proj_{[\Gamma_{\min}, \Gamma_{\max}]} \Big( \Gamma_{j} - \frac{1}{\xi \gamma_j} \hat{h}_{j}(\Gamma_{j}) \Big) - \Gamma^*_{\mathcal{M}}\right]^2 \leq \left[\Gamma_{j} - \frac{1}{\xi \gamma_j} \hat{h}_{j}(\Gamma_{j})  - \Gamma^*_{\mathcal{M}}\right]^2 \\
        & = (\Gamma_{j} - \Gamma^*_{\mathcal{M}})^2 + \frac{1}{\xi^2 \gamma^2_j} \hat{h}_{j}^2(\Gamma_{j}) -  \frac{2}{\xi\gamma_j} \hat{h}_{j}(\Gamma_{j}) (\Gamma_{j}  - \Gamma^*_{\mathcal{M}}).
    \end{align*}
    Rearranging the above equation, we get
    \begin{equation} \label{preq: 2_9}
        2 \hat{h}_{j}(\Gamma_{j}) (\Gamma_{j}  - \Gamma^*_{\mathcal{M}}) = \xi \gamma_j (\Gamma_{j} - \Gamma^*_{\mathcal{M}})^2 - \xi \gamma_j (\Gamma_{j+1} - \Gamma^*_{\mathcal{M}})^2 + \frac{1}{\xi \gamma_j} \hat{h}_{j}^2(\Gamma_{j}).
    \end{equation}
    Substituting~\eqref{preq: 2_8} and~\eqref{preq: 2_9} into~\eqref{preq: 2_7}, we compute the following summation:
    \begin{align}
        &2\sum_{j=1}^{\rk(T)} f_j(\Gamma_j) - f_j(\Gamma^*_{\mathcal{M}}) \nonumber\\
        \leq & \sum_{j=1}^{\rk(T)} (\Gamma_j - \Gamma^*_{\mathcal{M}})^2 (\xi \gamma_j - \xi \gamma_{j-1} + a_j - \rc_j) + \sum_{j=1}^{\rk(T)} \frac{1}{\xi \gamma_j} \hat{h}_{j}^2(\Gamma_{j}) + \frac{l_{\max}}{a_j} \sum_{j=1}^{\rk(T)} \sum_{i \in A_j} (\hat{\mu}_{j, i} - \mu_{j, i})^2\nonumber\\
        &\text{since  $\frac{1}{\gamma_0}  \triangleq 0$, $\abs{\hat{h}_{j}(\Gamma_j)} \leq G \triangleq n_{\max} \Big(1+\tfrac{l_{\max}}{l_{\min}}\Big)$, and we select  $a_j = (1-\xi) \rc_j$} \nonumber\\
        =& 0 + G^2 \sum_{j=1}^{\rk(T)} \frac{1}{\xi \gamma_j} + \frac{l_{\max}}{a_j} \sum_{j=1}^{\rk(T)} \sum_{i \in A_j} (\hat{\mu}_{j, i} - \mu_{j, i})^2 \nonumber\\
        &\text{since } {\rc_j} \geq l_{\min} \nonumber\\
        \leq & 0 + \frac{G^2}{\xi} \sum_{j=1}^{\rk(T)} \frac{1}{j l_{\min}} + \frac{l_{\max}}{(1-\xi)l_{\min}} \sum_{j=1}^{\rk(T)} \sum_{i \in A_j} (\hat{\mu}_{j, i} - \mu_{j, i})^2 \nonumber\\
         \leq& \frac{G^2}{\xi l_{\min}} \big[ 1+\log(\rk(T)) \big]  + \frac{l_{\max}}{(1-\xi)l_{\min}} \sum_{j=1}^{\rk(T)} \sum_{i \in A_j} (\hat{\mu}_{j, i} - \mu_{j,i})^2 \nonumber\\
         &\text{since $\rk(T) \leq \tfrac{T}{l_{\min}}$}\nonumber\\
        \leq& \frac{G^2}{\xi l_{\min}} \left[ 1+\log\left(\tfrac{T}{l_{\min}}\right) \right]+ \frac{l_{\max}}{(1-\xi)l_{\min}} \sum_{j=1}^{\rk(T)} \sum_{i \in A_j} (\hat{\mu}_{j, i} - \mu_{j,i})^2. \label{preq: 2_10}
    \end{align}

    \textbf{Step 4:} In~\eqref{preq: martingale}, we have shown    
    \[
        \mathbb{E} \Bigg[ \sum_{j=1}^{\rk(T)} (\Gamma^*_{\mathcal{M}} - \Gamma_j)\, h_j(\Gamma^*_{\mathcal{M}}) \Bigg] = 0.
    \]
    
    Applying it together with~\eqref{preq: 2_10} to~\eqref{preq: 2_6}, we obtain
    \[
    \mathbb{E} \left[ \left( \textstyle \sum_{j=1}^{\rk(T)} l_{\max} \, |\Gamma_j - \Gamma^*_{\mathcal{M}}| \right)^{\! 2} \right]
    \le \frac{U_T}{\xi} + \frac{T}{1-\xi} \left( \frac{l_{\max}}{l_{\min}} \right)^3 \expt \left[ \textstyle\sum_{j=1}^{\rk(T)} \sum_{i \in A_j} (\hat{\mu}_{j, i} - \mu_{j,i})^2\right].
    \]    
    With this result, and using the fact that $\mathbb{E}[\rx] \le \sqrt{\mathbb{E}[\rx^2]}$ for any random variable $\rx$, it follows from~\eqref{preq: coaf_sa} that
    \begin{align}
        \supscr{R}{C}_T \leq&  \sqrt{ \frac{U_T}{\xi} + \frac{T}{1-\xi} \left( \frac{l_{\max}}{l_{\min}} \right)^3 \expt \left[ \textstyle\sum_{j=1}^{\rk(T)} \sum_{i \in A_j} (\hat{\mu}_{j, i} - \mu_{j,i})^2\right]}  \nonumber\\
        +& \expt \left[\textstyle\sum_{j=1}^{\rk(T)} \sum_{i \in A_j} (\hat{\mu}_{j, i} - \mu_{j,i})\right] + l_{\max}\Gamma_{\max}.\label{preq: coaf_fs}
    \end{align}
    In addition, Cauchy–Schwarz inequality gives
    \[ \Bigg[\sum_{j=1}^{\rk(T)} \sum_{i \in A_j} (\hat{\mu}_{j, i} - \mu_{j,i}) \Bigg]^2 \leq \sum_{j=1}^{\rk(T)} \abs{A_j} \times \sum_{j=1}^{\rk(T)} \sum_{i \in A_j} (\hat{\mu}_{j, i} - \mu_{j,i})^2 \leq \frac{T n_{\max}}{l_{\min}} \sum_{j=1}^{\rk(T)} \sum_{i \in A_j} (\hat{\mu}_{j, i} - \mu_{j,i})^2.\]
    Applying $\mathbb{E}[\rx] \le \sqrt{\mathbb{E}[\rx^2]}$ to the second expectation in~\eqref{preq: coaf_fs}, and then combining it with the above inequality, we conclude the proof.
\end{proof}

With~\cref{lm: COAF} in place, we leverage the theoretical results for standard contextual bandits in~\cref{app: cb} to analyze the regret of COAF. For some $\delta > 0$, we define event
\begin{align*}
    \mathcal{E} &\triangleq \left\{  \forall k \in \natural : \psi_* \in \mathcal{F}_k \right\}.
\end{align*}

\subsection{Regret Upper Bound with Linear Regressor Class}

\begin{proof}[Proof of~\cref{th: cmbub}]
For a linear mean reward function $\psi_* \in \subscr{\mathcal{F}}{l}$, the event $\mathcal{E}$ can also be written as
\[\mathcal{E} = \left\{  \forall j \in \natural : \theta_* \in \mathcal{C}_j \right\},\]
where
\[\mathcal{C}_j  = \biggsetdef{\theta \in \real^d}{\big\lVert{\theta - \Bar{\theta}_{j-1} }\big\rVert_{\mV_{j-1}(\lambda)}^2 < \beta(\rs_{j-1}, \delta), \, \norm{\theta} \leq 1 }.\]
By~\cref{lm: LCB}, we also have $P(\neg \mathcal{E}) \leq \delta$, using the fact that $\det V_{n}(\lambda) \leq (\lambda + n/d)^d$.

For each $\rvx_{j,i}$, let $\hat{\theta}_{j,i} \in \argmax_{\theta \in \mathcal{C}_j} \langle \theta, \rvx_{j,i}\rangle$, i.e., $\hat{\mu}_{j,i} = \langle \hat{\theta}_{j,i}, \rvx_{j,i}\rangle$.
Since $\delta \in (0, 1/\sqrt{e} \,]$ ensures $\beta(0,\delta) \ge 1$, 
if the event $\mathcal{E}$ occurs, we can apply~\cref{lm: LCB_sw} to obtain
\[\sum_{j=1}^{\rk(T)} \sum_{i \in A_j} (\hat{\mu}_{j, i} - \mu_{j,i})^2 = \sum_{j=1}^{\rk(T)} \sum_{i \in A_j} \langle \hat{\theta}_{j,i} - \theta_*, \rvx_{j,i}\rangle^2 \leq 8d \beta(\rs_{\rk(T)}, \delta)\log \left( \frac{d \lambda + n_{\max} \rk(T) }{d\lambda}\right),\]
where $\rs_k = \sum_{j=1}^k \abs{A_j}$. Since $\rk(T) \leq {T}/{l_{\min}}$ and $\rs_{\rk(T)} \leq {T  n_{\max}}/{l_{\min}}$, we also have
\begin{equation}\label{preq: lin2}
    \sum_{j=1}^{\rk(T)} \sum_{i \in A_j} (\hat{\mu}_{j, i} - \mu_{j,i})^2 \leq \frac{W_T (\delta)}{T}.
\end{equation}
If $\mathcal{E}$ occurs, we apply~\eqref{preq: lin2} to~\cref{lm: COAF} to get
\begin{align*}
    \supscr{R}{C}_T \leq&  \sqrt{\frac{U_T}{\xi}  + \frac{1}{1-\xi} \bigg( \frac{l_{\max}}{l_{\min}} \bigg)^{\!3} W_T(\delta) } +  \sqrt{\frac{ n_{\max}}{l_{\min}} W_T(\delta)} + l_{\max}\Gamma_{\max},
\end{align*}
which holds with probability at least $1 - \delta$.

\end{proof}

\subsection{Regret Upper Bound for General Regressor Class}

\begin{proof}[Proof of~\cref{th: cmbub2}]
Using the concentration property of $\Bar{\psi}_k$ from~\cref{lemma: eluder}, we have 
$P(\neg \mathcal{E}) \leq 2\delta$.

For each $\rvx_{j,i}$, let $\hat{\psi}_{j,i} \in \argmax_{\psi \in \mathcal{F}_j}(\rvx_{j,i})$, i.e., $\hat{\mu}_{j,i} = \hat{\psi}_{j,i}(\rvx_{j,i})$. Conditioned on the event $\mathcal{E}$, we also have $\psi_* \in \mathcal{F}_j$ for any $j \in \natural$ and $i \in [\rn_j]$. Thus,
\[\sum_{j=1}^{\rk(T)} \sum_{i \in A_j} (\hat{\mu}_{j, i} - \mu_{j,i})^2 = \sum_{j=1}^{\rk(T)} \sum_{i \in A_j} \left[\hat{\psi}_{j,i}(\rvx_{j,i}) - \psi_* (\rvx_{j,i})\right]^2 \leq \sum_{j=1}^{\rk(T)} \sum_{i \in A_j} w^2_{\mathcal{F}_{j}} (\rvx_{j,i}).\]

Then we apply~\cref{crl: CBRO_sw} to get
\begin{equation}
\begin{split}
    &\sum_{j=1}^{\rk(T)} \sum_{i \in A_j} w^2_{\mathcal{F}_{j}} (\rvx_{j,i})  \\
    \leq & 4\subscr{\dim}{E} \left(\mathcal{F}, \frac{1}{\rs_{\rk(T)}}\right) + 1 + 4\left[\tilde\beta (\rk(T), \mathcal{F},\delta, \alpha) + 4n_{\max}\right] \subscr{\dim}{E} \left(\mathcal{F}, \frac{1}{\rs_{\rk(T)}}\right) \left(1+\log (\rs_{\rk(T)})\right) \\
    &\text{since $\rk(T) \leq {T}/{l_{\min}}$ and $\rs_{\rk(T)} \leq {T  n_{\max}}/{l_{\min}}$}\\
    \leq & 4\subscr{\dim}{E} \left(\mathcal{F}, \tfrac{l_{\min}}{T n_{\max}}\right) + 1 + 4\left[\tilde\beta \left(\tfrac{T}{n_{\min}}, \mathcal{F},\delta, \alpha \right) + 4n_{\max}\right] \subscr{\dim}{E} \left(\mathcal{F}, \tfrac{l_{\min}}{T n_{\max}}\right) \left(1+\log \tfrac{T n_{\max}}{l_{\min}} \right),
\end{split}\label{preq: gen2} 
\end{equation}
where $\rs_k = \sum_{j=1}^k \abs{A_j}$. We then apply~\cref{preq: gen2} to~\cref{lm: COAF} to get
\[
    \supscr{R_T}{C} \leq \sqrt{\frac{U_T}{\xi}  + \frac{1}{1-\xi} \Big( \frac{l_{\max}}{l_{\min}} \Big)^{\!3} \tilde{W}_T(\delta) } +  \sqrt{\frac{ n_{\max}}{l_{\min}} \tilde{W}_T(\delta)} + \frac{n_{\max} l_{\max}}{l_{\min}},
\]
which holds with probability at least $1-2\delta$.
\end{proof}

\end{document}